\newtheorem{thm}{Theorem}
\newtheorem{clm}{Claim}
\newtheorem{assume}{Assumption}
\def\R{\mathbb{ R}}
\def\N{\mathbb{ N}}
\def\bone{\mathbf{1}}
\def\bzero{\mathbf{0}}
\def\bx{\mathbf{x}}
\def\hatc{\hat{c}}
\def\hath{\hat{h}}
\def\E{\mathbb{ E}}
\def\calX{\mathcal{ X}}
\def\hatcalR{\hat{\mathcal{R}}}
\def\hatcalU{\hat{\mathcal{U}}}
\def\hatcalS{\hat{\mathcal{S}}}
\def\calE{\mathcal{ E}}
\def\calS{\mathcal{ S}}
\def\calA{\mathcal{ A}}
\def\calT{\mathcal{ T}}
\def\calD{\mathcal{ D}}
\def\calC{\mathcal{ C}}
\def\calH{\mathcal{ H}}
\def\calM{\mathcal{ M}}
\def\calL{\mathcal{ L}}
\def\calU{\mathcal{ U}}
\def\calY{\mathcal{ Y}}
\def\calB{\mathcal{ B}}
\def\hatmu{\hat{\mu}}
\def\hats{\hat{s}}
\def\hatt{\hat{t}}
\def\baru{\bar{u}}
\def\calP{\mathcal{P}}
\def\hatSigma{\hat{\Sigma}}
\newcommand{\thead}[1]{\multicolumn{1}{c}{\textbf{#1}}}
\newcommand{\lhead}[1]{\multicolumn{1}{l}{\textbf{#1}}}
\newcommand{\argmin}{\mathop{\rm argmin}\limits}
\newcommand\norm[1]{\left\lVert#1\right\rVert}
\newcommand{\defmin}{\mathop{\rm min}\limits}
\newcommand{\inner}[1]{\left\langle#1\right\rangle}
\title{Statistical Optimal Transport posed as\\ Learning Kernel Mean Embedding}
\author{%
  J.~Saketha~Nath\\
  Department of Computer Science and Engineering,\\
  Indian Institute of Technology Hyderabad, INDIA.\\
  \texttt{saketha@cse.iith.ac.in} \\
   \And
   Pratik Jawanpuria\\
  Microsoft IDC,\\
  Hyderabad, INDIA. \\
  \texttt{pratik.jawanpuria@microsoft.com}\\
}
\begin{document}

\maketitle

\begin{abstract}
The objective in statistical Optimal Transport (OT) is to consistently estimate the optimal transport plan/map solely using samples from the given source and target marginal distributions. This work takes the novel approach of posing statistical OT as that of learning the transport plan's kernel mean embedding from sample based estimates of marginal embeddings. The proposed estimator controls overfitting by employing maximum mean discrepancy based regularization, which is complementary to $\phi$-divergence (entropy) based regularization popularly employed in existing estimators. A key result is that, under very mild conditions, $\epsilon$-optimal recovery of the transport plan as well as the Barycentric-projection based transport map is possible with a sample complexity that is completely dimension-free. Moreover, the implicit smoothing in the kernel mean embeddings enables out-of-sample estimation. An appropriate representer theorem is proved leading to a kernelized convex formulation for the estimator, which can then be potentially used to perform OT even in non-standard domains. Empirical results illustrate the efficacy of the proposed approach.

\end{abstract}
\section{Introduction}
Optimal Transport is proving to be an increasingly successful tool in solving diverse machine learning problems. Recent research shows that variants of Optimal Transport (OT) achieve state-of-the-art performance in various machine learning (ML) applications such as data alignment/integration~\cite{alvarez18wordEmb,NIPS2019_9501,tam-etal-2019-optimal,pmlr-v108-janati20a}, domain adaptation~\cite{Courty17domAda,pmlr-v89-redko19a}, model interpolation/combination~\cite{solomon15graphics,bioapp19,dognin2018wasserstein}, natural language processing~\cite{Xu2018DistilledWL,yurochkin2019hierarchical} etc. It is also shown that OT based (Wasserstein) metrics serve as good loss functions in both supervised~\cite{Frogner15,mjaw20a} and  unsupervised~\cite{Genevay2017LearningGM} learning.

Given two marginal distributions over source and target domains, and a cost function between elements of the domains, the classical OT problem (Kantorovich's formulation) is that of finding the joint distribution whose marginals are equal to the given marginals, and which minimizes the expected cost with respect to this joint distribution~\cite{KatoroOT}. This joint distribution is known as the (optimal) transport plan or the optimal coupling. A related object of interest for ML applications is the so-called Barycentric-projection based transport map corresponding to a transport plan (e.g., refer (11) in \cite{seguy2018large}). Though OT techniques already improve state-of-the-art in many ML applications, there are two main bottlenecks that seem to limit OT's success in ML settings:
\begin{itemize}[noitemsep,topsep=0pt,leftmargin=*]
    \item while continuous distributions are ubiquitous, algorithms for finding the transport plan/map over continuous domains are very scarce~\cite{Genevay16sgd}. The situation is worse in case of non-standard domains, which are not uncommon in ML.
    \item the marginal distributions are never available, and merely samples from them are given. The variant of OT where the transport plan/map needs to be estimated merely using samples from the marginals is known as the statistical OT problem. Unfortunately, this estimation problem is plagued with the curse of dimensionality: the sample complexity of $O(m^{-1/d})$, where $m$ is number of samples, and $d$ is the dimensionality of data, cannot be improved without further assumptions~\cite{nilesweed2019estimation}.
\end{itemize}

 Though several works alleviated the curse of dimensionality~\cite{Genevay19sampComp,nilesweed2019estimation,hutter2020minimax}, none of them completely remove the adversarial dependence on dimensionality. Further, authors in~\cite{Genevay16sgd,Genevay19sampComp,Forrow19facCoup,repwork} comment that estimators that are free from the curse of dimensionality are important, yet not well-studied. The concluding report from a recent workshop on OT~(refer section 2~in~\cite{repwork}) summarizes that one of the major open problems in this area is to design estimators in context of continuous statistical OT whose sample complexity is not a strong function of the dimension (ideally dimension-free).

Our work focuses on this challenging and important problem of statistical OT over continuous domains, and seeks consistent estimators for $\epsilon$-optimal transport plan/map, whose sample complexity is dimension-free. To this end, we recall the observation that the estimation in statistical continuous OT is as hard as that of estimating the involved densities~\cite{nilesweed2019estimation}. Since estimating densities is challenging in high dimensions~\cite{dudley1969}, estimating OT plan/map also turns out to be equally challenging. Here we make the critical observation that though densities are known to be hard to estimate, the corresponding kernel mean embeddings~\cite{MAL-060} are easy to estimate. Infact, their sample complexity, $O(1/\sqrt{m})$, is completely independent of the dimensionality.

Motivated by this, we take the novel approach of equivalently re-formulating the statistical OT problem solely in terms of the relevant kernel mean embeddings. More specifically, our formulation finds the (characterizing) kernel mean embedding of a joint distribution with least expected cost, and whose marginal embeddings are close to the given-sample based estimates of the marginal embeddings. There are several advantages of this new approach:
\begin{enumerate}[noitemsep,topsep=0pt,leftmargin=*]
    \item because the samples based estimates of the kernel mean embeddings of the marginals are known to have sample complexities that are dimension-free, it is expected that the sample complexity remains dimension-free even for the proposed estimator of the transport plan embedding. This is illustrated in figure~\ref{fig:motive}.
    \item kernel embeddings provide implicit smoothness, as controlled by the kernel. Appropriate smoothness not only improves the quality of estimation, but also enable out-of-sample estimation.
    \item since Maximum Mean Discrepency (MMD) is the natural notion of distance in the kernel mean embedding space, this reformulation facilitates MMD based regularization for controlling overfitting. Such regularizers are complementary to the $\phi$-divergence (or entropy) based regularizers popularly employed in existing estimators. ~\cite{Sriperumbudur09onintegral} oberves that MMD and $\phi$-divergence based regularization exhibit complementary properties and hence both are interesting to study.
\end{enumerate}
\begin{figure}
        \center{\includegraphics[width=\textwidth]
        {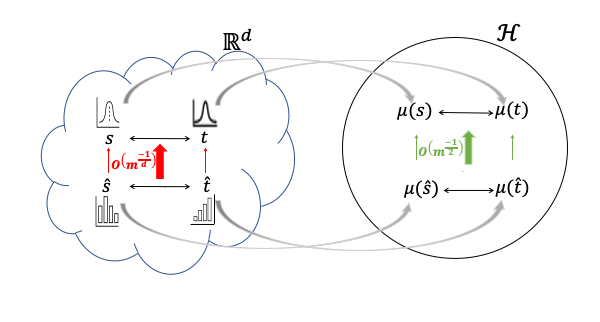}}
        \caption{\label{fig:motive} Illustration of the key idea in the proposed methodology. $s,t$ represent the densities and $\hats,\hatt$ denote their sample-based estimates. $\mu(s),\mu(t)$ denote their kernel mean embeddings, while the sample-based estimates of the embeddings are denoted by $\mu(\hats),\mu(\hatt)$. Since sample-based embeddings are known to converge faster, OT is expected to converge faster in the embedding space.}
      \end{figure}
      
A key result from this work is that, under very mild conditions, the proposed methodology can recover an $\epsilon$-optimal transport plan and corresponding (Barycentric-projection based) transport map with a sample complexity, $O(m^{-1/2})$, which is completely dimension-free~\footnote{Here we use a simplified notation for $O(m^{-1/2})$, where the terms involving $\epsilon$ are ignored. Nevertheless, all the terms/constants ignored are indeed independent of the dimension (and $m$).}. Another contribution is an appropriate representer theorem that guarantees finite characterization for the transport plan embedding, leading to a fully kernelized and convex formulation for the estimation. Thus the same formulation can potentially be used for obtaining estimators with all variants of OT: continuous, semi-discrete, and discrete, merely by switching the kernel between the Kronecker delta  and the Gaussian kernels. More importantly, the same can be used to solve OT problems in non-standard domains using appropriate universal kernels~\cite{univker10}. Finally, we discuss special cases where the proposed convex formulation can be solved efficiently using ADMM, Conditional gradient etc. Empirical results on synthetic and real-world datasets illustrate the efficacy of the proposed approach.


\section{Background on OT and Kernel Embeddings}\label{sec:backg}
In this section we briefly summarize the theories of optimal transport and kernel mean embeddings, which are essential for understanding the proposed methodology. Familiar readers may skip this section entirely.

\subsubsection*{Optimal Transport}
We begin with a brief on optimal transport. For more details, please refer~\cite{CompOT}, which is a comprehensive monologue on the subject with focus on recent developments related to machine learning.

Let $\calX,\calY$ be any two sets that form locally compact Hausdorff topological spaces. We denote the set of all Radon probability measures over $\calX$ by $\calM^1(\calX)$; whereas we denote the set of strictly positive measures by $\calM^1_+(\calX)$ . Let $c:\calX\times\calY$ denote a function that evaluates the cost between elements in $\calX,\calY$ and let $p_s\in\calM^1_+(\calX),p_t\in\calM^1_+(\calY)$.  Then, the Kantorovich's OT formulation~\cite{KatoroOT} is:
\begin{equation}\label{eqn:Kantorov}
\begin{array}{ll}
\min_{\pi\in\calM^1\left(\calX,\calY\right)}&\int c(x,y)\ \textup{d}\pi(x,y),\\
\textup{s.t.}& \pi^\calX = p_s, \pi^\calY=p_t,
\end{array}
\end{equation}
where $\pi^\calX,\pi^\calY$ denote the marginal measures of $\pi$ over $\calX,\calY$ respectively. An optimal solution of (\ref{eqn:Kantorov}) is referred to as an optimal transport plan or optimal coupling. 

\textbf{Statistical OT}: In the setting of statistical OT, the marginals $p_s,p_t$ are not available; however, iid samples from them are given. 
Let $\calD_x=\left\{x_1,\ldots,x_m\right\}$ denote the set of $m$ iid samples from $p_s$ and let  $\calD_y=\left\{y_1,\ldots,y_n\right\}$ denote $n$ iid samples from $p_t$. The cost function is known only at the sample data points. Let $\calC\in\R^{m\times n}$ denote the cost matrix with with $(i,j)^{th}$ entry as $c(x_i,y_j)$. 

A popular way to estimate the optimal plan in (\ref{eqn:Kantorov}) is to simply employ the sample based plug-in estimates for the marginals: $\hat{p}_s\equiv\frac{1}{m}\sum_{i=1}^m\delta_{x_i}$ and $\hat{p}_t\equiv\frac{1}{n}\sum_{j=1}^n\delta_{y_j}$, in place of the true (unknown) marginals. Here, $\delta$ denotes the Dirac delta function. In such a case, (\ref{eqn:Kantorov}) simplifies as the standard discrete OT problem:
\begin{equation}\label{eqn:discOT}
\begin{array}{ll}
\min_{\pi\in\R^{m\times n}}&tr(\pi\calC^\top),\\
\textup{s.t.}& \pi\bone = \frac{1}{m}\bone, \pi^\top\bone=\frac{1}{n}\bone, \pi\ge\bzero,
\end{array}
\end{equation}
where $tr(M)$ is the trace of matrix $M$, and $\bone,\bzero$ denote vectors/matrices with all entries as unity, zero respectively (of appropriate dimension). Since the sample complexity of (\ref{eqn:discOT}) in estimating (\ref{eqn:Kantorov}) is prohibitively high for high-dimensional domains~\cite{nilesweed2019estimation}, alternative estimation methods are sought after.

\subsubsection*{Kernel mean embeddings}
This section presents a brief on the theory of kernel mean embeddings. For more details, please refer~\cite{Song09condembd}. Let $k$ be a kernel defined over a domain $\calX$ and let $\phi_k,\calH_k$ be the kernel's canonical feature map and the canonical RKHS. Then, the kernel mean embedding of a random variable $X$ is defined as $\mu_X\equiv\E\left[\phi(X)\right]$. The embedding $\mu_X$ is well-defined, and $\mu_X\in\calH_k$, whenever $k$ is normalized. Further, if (and only if) $k$ is a characteristic kernel~\cite{SriperumbudurFL11}, then the map $X\mapsto\mu_X$ is one-to-one. For discrete probability measures, the Kronecker delta kernel is characteristic, while for continuous measures over $\R^d$, the Gaussian kernel is an example of a characteristic kernel. Using these embeddings, one can compute expectations of functions of the respective random variables, whenever they exist: for e.g., $\E[f(X)]=\E[\langle f,\phi(x)\rangle]_{\calH_k}=\langle f,\E[\phi(X)]\rangle_{\calH_k}=\langle f,\mu_X\rangle_{\calH_k}\forall\ f\in\calH_k$.

The notion of kernel mean embeddings easily extends to the case of jointly defined random variables. Let $k_1,k_2$ be two kernels defined over domains $\calX,\calY$ respectively. Let $\phi_1,\phi_2$ be the corresponding canonical feature maps and let $\calH_1,\calH_2$ be the canonical RKHSs. Then the cross-covariance operator (the joint embedding) is defined as $C_{XY}\equiv\E\left[\phi_1(X)\otimes\phi_2(Y)\right]$, where $\otimes$ denotes the tensor product. Again, whenever $k_1,k_2$ are individually characteristic, the map $(X,Y)\mapsto C_{XY}$ is one-to-one and $\E[h(X,Y)]=\langle h,C_{XY}\rangle_{\calH_{1}\otimes\calH_{2}}\ \forall\ h\in\calH_1\otimes\calH_2$. Analogously, one can define the auto-covariance operator $C_{XX}\equiv\E\left[\phi_1(X)\otimes\phi_1(X)\right]$.

The notion of embedding conditionals is also straight-forward: $\mu_{Y/x}\equiv\E\left[\phi_2(Y)/x\right]$. Additionally, one defines a conditional embedding operator $C_{Y/X}:\calH_1\mapsto\calH_2$, such that $C_{Y/X}\left(\phi_1(x)\right)=\mu_{Y/x}\ \forall\ x\in\calX$. For convenience of notation, $C_{Y/X}\left(\phi_1(x)\right)$ is simplified as $C_{Y/X}\phi_1(x)$. With this definition, one can show that the relation $C_{Y/X}C_{XX}=C_{YX}$ holds. Also, the kernel sum rule~\cite{Song09condembd} relates the conditional operator to the mean embeddings: $\mu_Y=C_{Y/X}\mu_X$.

We end with this brief with a note on the related notion of universal kernel~\cite{SriperumbudurFL11}. A kernel defined over a domain $\calX$ is universal iff its RKHS is dense in the set of all continuous functions over $\calX$~\cite{univker10}. If $k_1,k_2$ are universal over $\calX,\calY$ respectively, then $k=k_1k_2$ is universal over $\calX\times\calY$. Moreover, $\phi_k(x,y)=\phi_{1}(x)\otimes\phi_{2}(y)\ \forall\ x\in\calX,\ y\in\calY$. Hence, the RKHS $\calH_1\otimes\calH_2$ is dense in the set of all continuous functions over $\calX\times\calY$. Finally, universal kernels are also characteristic.

\section{Proposed Methodology}\label{sec:main}



We begin by re-formulating (\ref{eqn:Kantorov}) solely in terms of kernel mean embeddings and operators. Let $k_1,k_2$ be characteristic  kernels defined over $\calX,\calY$ respectively. Let $\phi_1,\phi_2$ and $\calH_1,\calH_2$ denote the canonical feature maps and the reproducing kernel Hilbert spaces (RKHS) corresponding to the kernels $k_1,k_2$ respectively. Let $\langle\cdot,\cdot\rangle_{\calH},\|\cdot\|_{\calH}$ denote the default inner-product, norm in the RKHS $\calH$. Let $\mu_{s}\equiv\E_{X\sim p_s}\left[\phi_1(X)\right],\ \mu_{t}\equiv\E_{Y\sim p_t}\left[\phi_2(Y)\right]$ denote the kernel mean embeddings of the marginals $p_s,p_t$ respectively. Let $\Sigma_{ss}\equiv\E_{X\sim p_s}\left[\phi_1(X)\otimes\phi_1(X)\right]$ and $\Sigma_{tt}\equiv\E_{Y\sim p_t}\left[\phi_2(Y)\otimes\phi_2(Y)\right]$ denote the auto-covariance embeddings of $p_s,p_t$ respectively. Recall that $\otimes$ denotes tensor product. 

Since the variable in (\ref{eqn:Kantorov}), $\pi$, is a joint measure, the cross-covariance operator, $\calU=\E_{(X,Y)\sim\pi}\left[\phi_1(X)\otimes\phi_2(Y)\right]$, is the suitable kernel mean embedding to be employed. However, since the constraints involve the marginals of $\pi$, denoted by $\pi_1,\pi_2$; it is natural to employ the kernel sum rule~\cite{Song09condembd}, which relates the cross-covariance operator to the marginal embeddings, $\mu_1,\mu_2$, via the conditional embedding operators, $\calU_1,\calU_2$, and the auto-covariance operators, $\Sigma^\calU_1,\Sigma^\calU_2$. 
The relations between these operators and embeddings follow from the definition of conditional embedding and the kernel sum rule \cite{Song09condembd}:
\begin{equation}\label{eqn:ksr}
\calU=\Sigma_{1}^\calU\left(\calU_1\right)^\top=\calU_2\Sigma_{2}^\calU,\ \calU_1\mu_{1}=\mu_{2},\ \calU_2\mu_{2}=\mu_{1}.
\end{equation}
Here, $M^\top$ denotes the adjoint of $M$.


In order to re-write the objective using the above operators, we assume that the cost function, $c(\cdot,\cdot)$, can be embedded in $\calH_1\otimes\calH_2$. This assumption is trivially true if the domains are discrete. However, in case of continuous domains this need not be true, in general. Hence we additionally assume that the kernel(s) corresponding to continuous domain(s) is(are) universal and that the cost function, $C(\cdot,\cdot)$, is continuous in that(those) continuous variable(s). It then follows that $c(\cdot,\cdot)$ can be arbitrarily closely approximated by elements in $\calH_1\otimes\calH_2$~\cite{SriperumbudurFL11}. Note that universal kernels are well-studied and known for non-standard domains too~\cite{univker10}. These very mild assumptions are summarized below:
\begin{assume}\label{ass:one}
Both kernels $k_1,k_2$ are characteristic. Moreover, if $k_i$ is over a continuous domain, then it is universal.
\end{assume}
\begin{assume}\label{ass:two}
We assume that $c\in\calH_1\otimes\calH_2$, where $c$ denotes either the exact function or the (arbitrarily) close approximation of it that can be embedded.
\end{assume}

Note that the objective in (\ref{eqn:Kantorov}) can be written as: $\E\left[c(X,Y)\right]=\langle c,\calU\rangle_{\calH_1\otimes\calH_2}$. Using this and (\ref{eqn:ksr}), leads to the following kernel embedding formulation for OT:
\begin{equation}\label{eqn:exactemb}
\begin{array}{cl}
\defmin_{\calU\in\calE_{21},\calU_1\in\calL_{12},\calU_2\in\calL_{21}}&\langle c,\calU\rangle_{\calH_1\otimes\calH_2}\\
\textup{s.t.\ \ \ \ }
&\calU_1\mu_s=\mu_t,\ \calU_2\mu_t=\mu_s,\\
&\calU=\Sigma_{ss}\calU_1^\top, \Sigma^\calU_1=\Sigma_{ss},\ \calU=\calU_2\Sigma_{tt}, \Sigma^\calU_2=\Sigma_{tt},
\end{array}
\end{equation}
where $\calL_{ij}$ is the set of all linear operators from $\calH_i\mapsto\calH_j$, and $\calE_{21}\equiv\left\{\calU\in\calL_{21}\ |\ \exists p\in\calM^1(\calX,\calY)\ \ni\ \calU=\E_{(X,Y)\sim p}\left[\phi_1(X)\otimes\phi_2(Y)\right]\right\}$ is the set of all valid cross-covariance operators. Note that the constraints $\calU=\Sigma_{ss}\calU_1^\top,\Sigma^\calU_1=\Sigma_{ss}\Rightarrow\calU=\Sigma^\calU_1\calU_1^\top$, which in turn gives that $\calU_1$ is a valid conditional embedding associated with $\calU$. However, we keep the former couple of constraints rather than the later one because i) there is no loss of generality ii) they will lead to an elagant representor theorem, kernelization and efficient optimization, as will be clear later. Analogous comments hold for the couple $\calU=\calU_2\Sigma_{tt},\Sigma^\calU_2=\Sigma_{tt}$.

The equivalence of (\ref{eqn:exactemb}) and (\ref{eqn:Kantorov}) follows from the one-to-one correspondence between the measures involved and their kernel embeddings, which is guaranteed by the characteristic kernels, and from the crucial embedding characterizing constraint: $\calU\in\calE_{21}$. Without this  characterizing constraint, the formulation is not meaningful.  We summarize the above re-formulation in the following theorem:

\begin{thm}
Under Assumptions~\ref{ass:one}-\ref{ass:two}, the Kantorovich formulation of OT, (\ref{eqn:Kantorov}), is  equivalent to (\ref{eqn:exactemb}).
\end{thm}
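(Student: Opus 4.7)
The plan is to establish equivalence by constructing a bijection between feasible solutions of (\ref{eqn:Kantorov}) and (\ref{eqn:exactemb}) with matching objective values.

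For the forward direction, given any feasible $\pi$ for (\ref{eqn:Kantorov}), I would set $\calU := \E_{\pi}[\phi_1(X) \otimes \phi_2(Y)]$, which is in $\calE_{21}$ by construction, and take $\calU_1, \calU_2$ to be the conditional embedding operators $C_{Y|X}, C_{X|Y}$ induced by $\pi$. Since $\pi^\calX = p_s$, the first-marginal auto-covariance of $\pi$ satisfies $\Sigma_{1}^\calU = \E_\pi[\phi_1(X) \otimes \phi_1(X)] = \Sigma_{ss}$, and analogously $\Sigma_{2}^\calU = \Sigma_{tt}$. The defining identity of conditional embeddings then gives $\calU = \Sigma_{1}^\calU \calU_1^\top = \Sigma_{ss}\calU_1^\top$ (and the symmetric statement), while the kernel sum rule yields $\calU_1 \mu_s = \mu_t$ and $\calU_2 \mu_t = \mu_s$. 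The objectives match because, under Assumption~\ref{ass:two}, $c \in \calH_1 \otimes \calH_2$ allows $\int c\, \textup{d}\pi = \E_\pi[\langle c, \phi_1(X) \otimes \phi_2(Y)\rangle_{\calH_1 \otimes \calH_2}] = \langle c, \calU\rangle_{\calH_1 \otimes \calH_2}$, where the interchange uses Bochner integrability guaranteed by boundedness of $\phi_1, \phi_2$ for normalized kernels.

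For the reverse direction, given feasible $(\calU, \calU_1, \calU_2)$, the membership $\calU \in \calE_{21}$ produces a Radon probability measure $\pi$ with $\calU = \E_\pi[\phi_1(X) \otimes \phi_2(Y)]$, and I must upgrade the embedding-level constraints to the distributional marginal constraints of (\ref{eqn:Kantorov}). The key observation is that $\Sigma_{1}^\calU = \Sigma_{ss}$ means $\E_{\pi^\calX}[f(X)g(X)] = \E_{p_s}[f(X)g(X)]$ for every $f,g \in \calH_1$; under Assumption~\ref{ass:one}, $k_1$ is universal on continuous $\calX$ (and, in the discrete case, $\calH_1$ is trivially all functions), so $\calH_1$ separates points and contains a nonzero constant in its algebra, and Stone–Weierstrass makes the algebra generated by $\calH_1$ dense in the continuous functions on $\calX$. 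This forces $\pi^\calX = p_s$, and symmetrically $\pi^\calY = p_t$, so $\pi$ is feasible for (\ref{eqn:Kantorov}). The objective equality follows by the same cross-covariance identity used in the forward direction.

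The main obstacle is this last step: moving from the second-order operator equality $\Sigma_{1}^\calU = \Sigma_{ss}$ to distributional equality of the marginals. The characteristic property only asserts injectivity of the mean map $X \mapsto \mu_X$, which is strictly weaker than injectivity of the auto-covariance map, so the proof genuinely needs the stronger universality assumption (for continuous domains) to invoke Stone–Weierstrass on the product algebra $\{fg : f, g \in \calH_1\}$. A cleaner alternative would be to exhibit $\mu_{\pi^\calX}$ directly from $\calU$ (e.g., if $1 \in \calH_2$ one could write $\mu_{\pi^\calX} = \calU \cdot 1$ and apply characteristic-ness alone), but since constants need not lie in universal RKHSs, the argument above is the natural route under the assumptions stated.
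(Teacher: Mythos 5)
Your proposal is correct in outline and is substantially more careful than the paper's own justification, which consists of a single sentence asserting that the equivalence ``follows from the one-to-one correspondence between the measures involved and their kernel embeddings, which is guaranteed by the characteristic kernels,'' together with the characterizing constraint $\calU\in\calE_{21}$. Your forward direction is exactly the paper's implicit argument (push $\pi$ through the embedding maps, use the kernel sum rule and $C_{Y/X}C_{XX}=C_{YX}$ for feasibility, and $\int c\,\textup{d}\pi=\langle c,\calU\rangle_{\calH_1\otimes\calH_2}$ for the objective). Where you genuinely add something is in the reverse direction: you correctly observe that the marginal constraints of (\ref{eqn:exactemb}) are imposed at the level of the auto-covariance operators $\Sigma_1^{\calU}=\Sigma_{ss}$, $\Sigma_2^{\calU}=\Sigma_{tt}$, so what is actually needed is injectivity of the second-moment map $q\mapsto\E_q[\phi\otimes\phi]$, which is \emph{not} what ``characteristic'' asserts; the paper silently conflates the two. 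Your Stone--Weierstrass route via universality is the right fix, and it explains why Assumption~\ref{ass:one} demands universality (not just characteristicness) on continuous domains --- a point the paper motivates only through the cost-approximation argument.

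Two caveats, neither fatal and both shared with the paper. First, your density step is slightly misstated: the constraint only lets you test $\pi^{\calX}$ against the linear span of pairwise products $\{fg: f,g\in\calH_1\}$, not against the full algebra generated by $\calH_1$; density of that span still follows (approximate one factor by $g_n\to 1$ uniformly, so the closure of the span contains $\calH_1$ and hence all continuous functions on a compact domain), but on non-compact $\calX$, where universality means density in $C_0(\calX)$, the approximation of the constant function needs more care. Second, both directions presuppose that the conditional embedding operators $C_{Y/X},C_{X/Y}$ exist as bounded elements of $\calL_{12},\calL_{21}$, which the standard construction $C_{YX}C_{XX}^{-1}$ does not guarantee for arbitrary $\pi$; the paper inherits this gap in writing down (\ref{eqn:exactemb}) itself, so you cannot be faulted for not closing it, but it is worth flagging as the remaining unproved hypothesis behind the claimed equivalence.
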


Note that unlike existing formulae for the operator embeddings~\cite{Song09condembd}, which eliminate two of the three operators $\calU,\calU_1,\calU_2$; we critically preserve all of them in (\ref{eqn:exactemb}). This is because they facilitate efficient regularization in the statistical estimation set-up and lead to efficient algorithms (as will be shown later). Also, the characterization of embedding, $\calE_{21}$, is included only for the cross-covariance, and not explicitly included for the conditional operators. This is fine because the conditional operators are well-defined given the cross-covariance, and the auto-covariances.

The key advantage of the proposed formulation (\ref{eqn:exactemb}) over (\ref{eqn:Kantorov}) is that the sample based estimates for kernel mean embeddings of the marginals, which are known to have dimension-free sample complexities, can be employed directly in the statistical OT setting.

\subsection{Re-formulation as Learning Embedding problem}
As motivated earlier, we aim to employ the standard sample based estimates for the kernel mean embeddings of the marginals in the re-formulation (\ref{eqn:exactemb}). To this end, let the estimates for the marginal kernel mean embeddings be denoted by: $\hatmu_s\equiv\frac{1}{m}\sum_{i=1}^m\phi_1(x_i)$ and $\hatmu_t\equiv\frac{1}{n}\sum_{j=1}^n\phi_2(y_j)$. Likewise, the estimates of the auto-covariance embeddings are given by $\hatSigma_{ss}\equiv\frac{1}{m}\sum_{i=1}^m\phi_1\left(x_i\right)\otimes\phi_1\left(x_i\right)$ and $\hatSigma_{tt}\equiv\frac{1}{n}\sum_{j=1}^n\phi_2\left(y_j\right)\otimes\phi_2\left(y_j\right)$.

In the statistical OT setting, the cost function, $c$, is only available at the given samples. In continuous domains, there will exist many functions in the RKHS that will exactly match $c$, when restricted to the samples. Each such choice will lead to a valid estimator. We choose $\hatc$ to be the orthogonal projection of $c$ onto the samples: $\hatc\equiv\sum_{i=1}^m\sum_{j=1}^n\rho^*_{ij}\phi_1(x_i)\otimes\phi_2(y_j)$, where $\rho^*\equiv\arg\min_{\rho}\left\|c-\sum_{i=1}^m\sum_{j=1}^n\rho_{ij}\phi_1(x_i)\otimes\phi_2(y_j)\right\|_{\calH_1\otimes\calH_2}$ and $\|\cdot\|_{\calH_1\otimes\calH_2}$ is the Hilbert-Schmidt operator norm. Straight-forward computation shows that $\rho^*=\left(G_1\odot G_2\right)^{-1}\calC$, where $G_1$ and $G_2$ are the gram-matrices with $k_1$ and $k_2$ over the samples $x_1,\ldots,x_m$ and $y_1,\ldots,y_n$ respectively, and $\odot$ denotes the element-wise product. Recall that $\calC\in\R^{m\times n}$ denotes the cost matrix with with $(i,j)^{th}$ entry as $c(x_i,y_j)$. For universal kernels, it follows that $\hatc$ will be equal to $c$ at the given samples, and hence the above is a valid choice for estimation. In addition, the above choice of $\hatc$ helps us in proving the representer theorem~(Theorem~\ref{thm:repthm}). 


Now, employing these estimates in (\ref{eqn:exactemb}) must be performed with caution as i) the equality constraints now will be in the (potentially infinite dimensional) RKHS, ii) more importantly, matching the estimates exactly will lead to overfitting. Hence, we propose to introduce appropriate regularization by insisting that there is a close match rather than an exact match. This leads to the following kernel mean embedding learning formulation:
\begin{equation}\label{eqn:empemb0}
\begin{array}{cl}
\defmin_{\calU\in\calE_{21},\calU_1\in\calL_{12},\calU_2\in\calL_{21}}&\langle\hatc,\calU\rangle_{\calH_1\otimes\calH_2}\\
\textup{s.t.\ \ \ \ \ \ }&\left\|\calU_1\hatmu_s-\hatmu_t\right\|_{\calH_2}\le\Delta_1,\ \left\|\calU_2\hatmu_t-\hatmu_s\right\|_{\calH_1}\le\Delta_2,\\
&\left\|\calU-\hatSigma_{ss}\calU_1^\top\right\|_{\calH_1\otimes\calH_2}\le\vartheta_1,\left\|\calU-\calU_2\hatSigma_{tt}\right\|_{\calH_1\otimes\calH_2}\le\vartheta_2,\\
&\|\Sigma_1^\calU-\hatSigma_{ss}\|_{\calH_1\otimes\calH_1}\le\zeta_1, \|\Sigma_2^\calU-\hatSigma_{tt}\|_{\calH_2\otimes\calH_2}\le\zeta_2,
\end{array}
\end{equation}
where $\Delta_1,\Delta_2,\vartheta_1,\vartheta_2,\zeta_1,\zeta_2$ are regularization hyper-parameters introduced to prevent overfitting to the estimates.
\subsection{Statistical Analysis of the Learning Formulation}
The proposed embedding learning formulation (\ref{eqn:empemb0}) is an approximation to the OT problem~(\ref{eqn:exactemb}) because of two reasons: i) the regularization hyper-parameters $\Delta_1,\Delta_2,\vartheta_1,\vartheta_2,\zeta_1,\zeta_2$, which are non-zero (positive) ii) sample-based estimates $(\hatc,\hatmu_s,\hatmu_t,\hatSigma_{ss},\hatSigma_{tt})$ are employed. While the effect of the former is clear, for e.g., as the hyper-parameters $\rightarrow0$, the approximation error, $\epsilon$, goes to zero; the sample complexity for estimation is more insightful. To this end we present the following theorem:
\begin{assume}\label{ass:three}
Let us assume that the kernels are normalized/bounded i.e., $\max_{x\in\calX}k_1(x,x)=1, \max_{y\in\calY}k_2(y,y)=1$.
\end{assume}
\begin{thm}\label{thm:sampcomp}
Let $g\left(\hatc,\hatmu_s,\hatmu_t,\hatSigma_{ss},\hatSigma_{tt}\right)$ denote the optimal objective of (\ref{eqn:empemb0}) in Tikhonov form. Under Assumptions \ref{ass:one}-\ref{ass:three}, with high probability we have that, $\left|g\left(\hatc,\hatmu_s,\hatmu_t,\hatSigma_{ss},\hatSigma_{tt}\right)-g\left(c,\mu_s,\mu_t,\Sigma_{ss},\Sigma_{tt}\right)\right|\le O\left(1/{\sqrt{\min(m,n)}}\right)$. The constants in the RHS of the inequality are dimension-free.
\end{thm}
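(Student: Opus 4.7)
The plan is to bound the discrepancy $|g(\hat{\cdot}) - g(\cdot)|$ by separating two ingredients: (i) a perturbation argument that bounds the change in the Tikhonov value as a function of its ``data'' inputs, and (ii) dimension-free concentration bounds on each of those data inputs. Concretely, I would pass from (\ref{eqn:empemb0}) to its Tikhonov form, in which each of the six inequality constraints is replaced by a squared-norm penalty with a fixed multiplier $\lambda_{\Delta_i}, \lambda_{\vartheta_i}, \lambda_{\zeta_i} > 0$. Let $F(\calU, \calU_1, \calU_2; c, \mu_s, \mu_t, \Sigma_{ss}, \Sigma_{tt})$ denote this joint objective, so that $g(\cdot)$ is its infimum over $\calU \in \calE_{21}, \calU_1 \in \calL_{12}, \calU_2 \in \calL_{21}$. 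The key structural fact is that $F$ is a polynomial of degree at most two in each data argument, with coefficients bounded by the operator norms of $\calU, \calU_1, \calU_2$.

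For the perturbation step I would use the standard sandwich argument: if $(\calU^\star, \calU_1^\star, \calU_2^\star)$ attains $g(c, \mu_s, \mu_t, \Sigma_{ss}, \Sigma_{tt})$, then
\begin{equation*}
g(\hatc, \hatmu_s, \hatmu_t, \hatSigma_{ss}, \hatSigma_{tt}) - g(c, \mu_s, \mu_t, \Sigma_{ss}, \Sigma_{tt}) \le F(\calU^\star, \calU_1^\star, \calU_2^\star; \hatc, \ldots) - F(\calU^\star, \calU_1^\star, \calU_2^\star; c, \ldots),
\end{equation*}
and analogously in the other direction, now sandwiching at the optimum of the sample problem. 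Expanding each summand and invoking Cauchy--Schwarz along with sub-multiplicativity of the operator norm yields
\begin{equation*}
|g(\hat{\cdot}) - g(\cdot)| \le C \Bigl( \|\hatc - c\|_{\calH_1 \otimes \calH_2} + \|\hatmu_s - \mu_s\|_{\calH_1} + \|\hatmu_t - \mu_t\|_{\calH_2} + \|\hatSigma_{ss} - \Sigma_{ss}\|_{\calH_1 \otimes \calH_1} + \|\hatSigma_{tt} - \Sigma_{tt}\|_{\calH_2 \otimes \calH_2} \Bigr),
\end{equation*}
with $C$ depending on the Tikhonov multipliers and the operator norms of both sets of optimizers, but not on the dimensionality.

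The concentration inputs are the classical dimension-free bounds for empirical kernel mean embeddings and covariance operators in their RKHS / Hilbert--Schmidt norms (see \cite{MAL-060}): under Assumption~\ref{ass:three}, each empirical quantity concentrates around its population counterpart at rate $O(1/\sqrt{m})$ or $O(1/\sqrt{n})$ with high probability, via vector-valued McDiarmid or Bernstein inequalities applied directly in the RKHS. For $\|\hatc - c\|$, since $\hatc$ is the orthogonal projection of $c \in \calH_1 \otimes \calH_2$ (Assumption~\ref{ass:two}) onto the data-dependent span of $\{\phi_1(x_i) \otimes \phi_2(y_j)\}$, the same $1/\sqrt{\min(m,n)}$ rate follows from concentration of the empirical cross-kernel operator together with standard projection-operator perturbation bounds. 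A union bound over these $O(1)$-many events delivers the claimed high-probability statement.

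The main obstacle is verifying that the constant $C$ is uniform in $m, n$ and independent of the ambient dimension. The membership $\calU \in \calE_{21}$ together with Assumption~\ref{ass:three} automatically gives $\|\calU\|_{\text{HS}} \le 1$; the $\lambda_{\zeta_i}$-penalties, together with $\|\hatSigma_{ss}\|, \|\hatSigma_{tt}\| \le 1$, uniformly bound $\|\Sigma_1^\calU\|, \|\Sigma_2^\calU\|$ at the optima. The most delicate point is translating these bounds into a uniform bound on $\|\calU_1\|, \|\calU_2\|$: one must exploit the $\lambda_{\vartheta_i}$-penalties tying $\calU$ to $\Sigma_i^\calU \calU_i^\top$ and use a suitable inversion argument on the regularized covariance operators. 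Once this uniform boundedness of the optimizers is established, every constant appearing in $C$ depends only on the kernel bound and the fixed Tikhonov hyper-parameters, and the final $O(1/\sqrt{\min(m,n)})$ rate is genuinely dimension-free.
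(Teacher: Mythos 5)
Your overall architecture (sandwich at the two optimizers plus termwise, dimension-free concentration of the empirical embeddings) matches the paper's proof, which establishes a uniform bound on $\hath-h$ over a bounded feasible set $\calB(\vartheta_1,\vartheta_2)$ and then runs the standard excess-risk decomposition; the $\hatmu$ and $\hatSigma$ terms are handled the same way in both arguments. However, there is a genuine gap in your treatment of the cost term. You propose to control $\|\hatc-c\|_{\calH_1\otimes\calH_2}$ at rate $O(1/\sqrt{\min(m,n)})$ via ``projection-operator perturbation bounds,'' but no such dimension-free RKHS-norm rate is available: $\hatc$ is the orthogonal projection of $c$ onto the span of $\{\phi_1(x_i)\otimes\phi_2(y_j)\}$, and the RKHS-norm residual of such a projection decays at a rate governed by the smoothness of $c$ relative to the kernel (and in general by the dimension), not at a universal $1/\sqrt{mn}$ rate. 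The paper avoids this entirely: it never bounds $\|\hatc-c\|_{\calH_1\otimes\calH_2}$, but instead bounds the only quantity that actually enters the objective, $|\langle\hatc-c,\calU\rangle|$, by $\sqrt{\E_u\left[(\hatc-c)^2\right]}$ via Jensen's inequality ($u$ being the probability measure underlying $\calU\in\calE_{21}$), and then invokes the random-feature approximation bound of \cite{Ali08approx} in the $L^2$ norm with respect to the product measure $\baru$, which does give a dimension-free $\|c\|_{\baru}/\sqrt{mn}$ rate. Without this weak-norm reduction, your Cauchy--Schwarz step forces you to pay the RKHS-norm approximation error of $c$, and the claimed rate does not follow.

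A second, smaller issue: you correctly identify that uniform boundedness of $\|\calU_1\|,\|\calU_2\|$ at the optima is the delicate point, but you leave it as an acknowledged obstacle rather than resolving it. The paper derives it from the constraint $\|\calU-\hatSigma_{ss}\calU_1^\top\|\le\vartheta_1$ together with $\|\calU\|\le1$ (boundedness of $\calE_{21}$ by Jensen's inequality and Assumption~\ref{ass:three}) and spectral bounds on $\hatSigma_{ss}$, and restricts attention to the resulting bounded set $\calB(\vartheta_1,\vartheta_2)$ before proving the uniform deviation claim. Since the constant $C$ in your perturbation bound depends on these operator norms, your argument is incomplete until this boundedness is supplied.
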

Theorem~\ref{thm:sampcomp} shows that embedding of an $\epsilon$-optimal transport plan can recovered by solving (\ref{eqn:empemb0}) with a sample complexity that is dimension-free. The proof of this theorem is detailed in~Appendix~\ref{app:samplecomplexity}. The idea is to uniformly bound the difference between the population and sample versions of each of the terms in the objective. Interestingly, each of these difference terms can either be bounded by relevant estimation errors in embedding space or by approximation errors in the RKHS, both of which are known to be dimension-free.

Note that the regularization in (\ref{eqn:empemb0}) is based on the Maximum Mean Discrepancy (MMD) distances between the kernel embeddings. This characteristic of our estimators is in contrast with the popular entropic regularization~\cite{sinkhorn13}, or $\phi$-divergence based regularization~\cite{Liero2018} in existing OT estimators.~\cite{Sriperumbudur09onintegral} argue that MMD and $\phi$-divergence based regularization have complementary properties. Hence both are interesting to study. While the dependence on dimensionality is adversely exponential with entropic regularization, if accurate solutions are desired~\cite{Genevay19sampComp}, 
the proposed MMD based regularization for statistical OT leads to dimension-free estimation.

\subsection{Representer theorem \& Kernelization}
Interestingly, (\ref{eqn:empemb0}) admits a finite parameterization facilitating it's efficient optimization. This important result is summarized in the representer theorem below:
\begin{thm}\label{thm:repthm}
Whenever (\ref{eqn:empemb0}) is solvable, there exists an optimal solution, $\calU^*,\calU^*_1,\calU^*_2$, of (\ref{eqn:empemb0}) such that $\calU^*=\sum_{i=1}^m\sum_{j=1}^n\alpha_{ij}\phi_1(x_i)\otimes\phi_2(y_j), \calU^*_1=\sum_{i=1}^m\sum_{j=1}^n\beta_{ji}\phi_2(y_j)\otimes\phi_1(x_i),\calU^*_2=\sum_{i=1}^m\sum_{j=1}^n\gamma_{ij}\phi_1(x_i)\otimes\phi_2(y_j)$. Here $\alpha\in\R^{m\times n},\beta\in\R^{n\times m},\gamma\in\R^{m\times n}$ that are an optimal solution for the kernelized and convex formulation (\ref{eqn:final}) given below:
\begin{equation}\label{eqn:final}
{\def\arraystretch{1.3}
\begin{array}{cl}
    \defmin_{\alpha,\gamma\in\R^{m\times n},\beta\in\R^{n\times m}} & tr(\alpha\calC^\top)\\
    \textup{s.t.}& \frac{1}{m^2}\bone^\top G_1\beta^\top G_2\beta G_1\bone-\frac{2}{mn}\bone^\top G_2\beta G_1\bone+\frac{1}{n^2}\bone^\top G_2\bone\le\Delta_1^2\\
    & \frac{1}{n^2}\bone^\top G_2\gamma^\top G_1\gamma G_2\bone-\frac{2}{mn}\bone^\top G_1\gamma G_2\bone+\frac{1}{m^2}\bone^\top G_1\bone\le\Delta_2^2\\
    & \left\langle G_1\alpha-\frac{1}{m}G_1^2\beta^\top,\alpha G_2-\frac{1}{m}G_1\beta^\top G_2\right\rangle_F\le\vartheta_1^2,\\
    & \inner{\alpha G_2 -\frac{1}{n}\gamma G_2^2,G_1\alpha-\frac{1}{n}G_1\gamma G_2}_F\le\vartheta_2^2,\\
    &\|\alpha\bone-\frac{1}{m}\bone\|^2_{G_1\odot G_1}\le\zeta_1^2,\|\alpha^\top\bone-\frac{1}{n}\bone\|^2_{G_2\odot G_2}\le\zeta_2^2\\
    &\alpha\ge0,\bone^\top\alpha\bone=1,
\end{array}
}
\end{equation}
where, $G_1$ and $G_2$ are the gram-matrices with $k_1$ and $k_2$ over $x_1,\ldots,x_m$ and $y_1,\ldots,y_n$ respectively, and $\|\bx\|_M^2\equiv \bx^\top M\bx$, is the mahalanobis squared-norm of $\bx$.
\end{thm}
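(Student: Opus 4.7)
The proof splits naturally into two parts: (i) a representer theorem establishing that an optimal solution of \eqref{eqn:empemb0} can be chosen in the finite-dimensional span of the sample tensor feature maps, and (ii) a kernelization step in which every objective/constraint term is rewritten using only Gram matrices. I would tackle (i) first, since (ii) then reduces to mechanical substitution.

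For (i) my plan is to project onto the sample-spanned subspaces. Let $V_1=\textup{span}\{\phi_1(x_i)\}_{i=1}^m\subset\calH_1$ and $V_2=\textup{span}\{\phi_2(y_j)\}_{j=1}^n\subset\calH_2$, with orthogonal projections $P_1,P_2$. The key observation is that every data object in \eqref{eqn:empemb0} --- namely $\hatc,\hatmu_s,\hatmu_t,\hatSigma_{ss},\hatSigma_{tt}$ --- already lies in the corresponding tensor subspace. Given any optimal $(\calU^*,\calU_1^*,\calU_2^*)$, I would replace each operator by its projected counterpart: $\tilde{\calU}_1=P_2\calU_1^* P_1$ (so the range sits in $V_2$ and the kernel contains $V_1^\perp$), and analogously $\tilde{\calU}_2=P_1\calU_2^* P_2$ and $\tilde{\calU}=(P_1\otimes P_2)\calU^*$. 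A Pythagorean decomposition then shows that the objective $\langle\hatc,\cdot\rangle_{\calH_1\otimes\calH_2}$ is unchanged (since $\hatc\in V_1\otimes V_2$) and each constraint LHS only decreases: for example $\calU_1\hatmu_s-\hatmu_t\in V_2$ already, so the $V_2^\perp$-output component of $\calU_1$ can only add to the $\Delta_1$-residual, and similarly for the $\vartheta$- and $\zeta$-norms.

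The delicate clause is the constraint $\calU\in\calE_{21}$, since a naive projection of a valid cross-covariance onto $V_1\otimes V_2$ need not itself embed a probability measure. My approach is to reparametrize in terms of the underlying measure: restrict to discrete measures $\pi_\alpha=\sum_{ij}\alpha_{ij}\delta_{(x_i,y_j)}$ supported on the sample grid, whose cross-covariance is exactly $\sum_{ij}\alpha_{ij}\phi_1(x_i)\otimes\phi_2(y_j)$ and whose induced source marginal auto-covariance equals $\sum_i(\alpha\bone)_i\,\phi_1(x_i)\otimes\phi_1(x_i)$. A probability measure then corresponds precisely to $\alpha\ge 0$ and $\bone^\top\alpha\bone=1$, recovering the last constraint of \eqref{eqn:final}. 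That no optimality is lost under this restriction follows by combining the projection step above with Assumption~\ref{ass:one}: injectivity of the measure-to-embedding map under characteristic kernels forces any $\calU\in\calE_{21}\cap(V_1\otimes V_2)$ to coincide with the cross-covariance of the unique matching discrete grid measure.

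For (ii) I would substitute the claimed parametrizations of $\calU,\calU_1,\calU_2$ into each expression, using four workhorse identities: the tensor inner product $\langle\phi_1(x_i)\otimes\phi_2(y_j),\phi_1(x_k)\otimes\phi_2(y_l)\rangle=(G_1)_{ik}(G_2)_{jl}$; the adjoint rule $(\phi_a\otimes\phi_b)^\top=\phi_b\otimes\phi_a$; the action $(\phi_a\otimes\phi_b)\psi=\langle\phi_b,\psi\rangle\phi_a$; and the reproducing identity $\langle\hatc,\phi_1(x_i)\otimes\phi_2(y_j)\rangle=c(x_i,y_j)$ obtained from the definition of $\hatc$ as the projection interpolant of $c$ on the sample pairs. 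The objective then collapses to $\textup{tr}(\alpha\calC^\top)$; the $\Delta$-constraints expand to the quadratic forms in $\beta$ and $\gamma$ shown in \eqref{eqn:final}; the $\vartheta$-constraints use $\hatSigma_{ss}\calU_1^\top=\frac{1}{m}\sum_k\phi_1(x_k)\otimes(\calU_1\phi_1(x_k))$ followed by Gram-matrix algebra; and the $\zeta$-constraints use the marginal formula above together with $\|\sum_i u_i\,\phi_1(x_i)\otimes\phi_1(x_i)\|^2=u^\top(G_1\odot G_1)u$ to produce the Mahalanobis norm inequalities. I expect the main obstacle to be the rigorous justification of the $\calE_{21}$ reduction in step (i); the remaining steps are bookkeeping, and convexity of \eqref{eqn:final} is automatic since each constraint is either linear or a composition of an affine map with a squared RKHS/Euclidean norm.
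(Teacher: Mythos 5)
Your proposal follows essentially the same route as the paper's proof: the paper decomposes each operator into its component in the span of the sample tensor features plus an orthogonal complement and shows via a projection argument that the complements vanish at optimality (your $P_1,P_2$ Pythagorean step), then identifies $\calE_{21}$ intersected with that span as exactly the simplex of grid-supported measures (your reparametrization by $\pi_\alpha$), and finally kernelizes mechanically. The one point where the paper is slightly more explicit than you is the $\calE_{21}$ reduction, which it argues by writing $\calE_{21}$ as the closure of convex combinations of tensor features at arbitrary points and invoking uniqueness of RKHS expansions under universal kernels --- the same injectivity facts you appeal to, so your argument is correct in substance.
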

The proof of this theorem is detailed in Appendix~\ref{app:repthm}. Apart from standard representer theorem-type arguments, the proof includes arguments that show that the characterizing set $\calE_{21}$ when restricted to the linear combinations of embeddings is exactly same as the convex combinations of those. This helps us replace the membership to $\calE_{21}$ constraint by a simplex constraint.

We note that (\ref{eqn:final}) is jointly convex in the variables $\alpha,\beta,$ and $\gamma$. This is because the constraints are either convex quadratic or linear and the objective is also linear. Hence obtaining estimators using (\ref{eqn:final}) is computationally tractable (refer also section~\ref{sec:special}). It is easy to verify that (\ref{eqn:final}) simplifies to the discrete OT problem (\ref{eqn:discOT}) if both the kernels are chosen to be the Kronecker delta and all the hyper-parameters are set to zero. If one of the kernel is chosen as the Kronecker delta and the other as the Gaussian kernel, then (\ref{eqn:final}) can be used for semi-discrete OT in the statistical setting. Additionally, by employing appropriate universal kernels, (\ref{eqn:final}) can  be used for statistical OT in non-standard domains.

We end this section with a small technical note. While the cross-covariance operator obtained by solving (\ref{eqn:final}) will always be a valid one; for some hyper-parameters, which are too high, it may happen that the optimal $\beta,\gamma$ induce invalid conditional embeddings. This may make computing the transport map~(\ref{eqn:inf}) intractable. Hence, in practice, we include additional constraints $\beta,\gamma\ge0$.

\subsection{Proposed Optimal Map Estimator}\label{sec:optmap}
Once the embedding of the transport plan is obtained by solving (\ref{eqn:final}), generic approaches for recovering the measure corresponding to a kernel embedding, detailed in~\cite{kanagawa14,conden19}, can be employed to recover the corresponding transport plan. Moreover, since the recovery methods in~\cite{conden19} have dimension-free sample complexity, the overall sample complexity for estimating the optimal transport plan hence remains dimension-free.

We estimate the Barycentric-projection based optimal transport map, $\calT$, at any $x\in\calX$ as follows:
\begin{equation}
\begin{array}{lll}
\calT(x)\equiv\argmin_{y\in\calY}\E\left[c\left(y,Y\right)/x\right]&=& \argmin_{y\in\calY}\left\langle c(y,\cdot),\calU_1^*\phi_1(x)\right\rangle,\\
&=&\argmin_{y\in\calY}\sum_{j=1}^n\left(c(y,y_j)\sum_{j=1}^n\left(\beta^*_{ji}k_1\left(x_i,x\right)\right)\right),\label{eqn:inf}
\end{array}
\end{equation}
where $\beta^*$ are obtained by solving (\ref{eqn:final}) and $\calU_1^*$ is the corresponding conditional embedding.
(\ref{eqn:inf}) turns out to be that of finding the Karcher mean~\cite{karcher2014riemannian}, whenever the cost is a squared-metric etc. 
Alternatively, one can directly minimize $\E\left[c\left(y,Y\right)/x\right]$ with respect to $y\in\calY$ using stochastic gradient descent (SGD). 
The following theorem summarizes the consistency with SGD:
\begin{thm}\label{thm:sgd}
Let the cost be a metric or it's powers greater than unity and let $\calY$ be compact. Then the SGD based estimator for $\calT$ has a sample complexity that remains dimension-free.
\end{thm}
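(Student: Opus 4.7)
The plan is to decompose the total error of the SGD-based estimator into two pieces: (i) a \emph{statistical/estimation error} coming from replacing the true conditional embedding operator $\calU_1$ with its sample-based counterpart $\hat\calU_1$ obtained from (\ref{eqn:final}), and (ii) an \emph{optimization error} coming from running SGD on the resulting (estimated) objective $\hat f_x(y)\equiv \langle c(y,\cdot),\hat\calU_1\phi_1(x)\rangle_{\calH_2}$ over the compact set $\calY$. The key observation is that both pieces can be controlled by quantities that depend only on properties of the kernels, the diameter/constants of $\calY$, and the sample sizes $m,n$, with no explicit dependence on the ambient dimension of $\calX$ or $\calY$.

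For the statistical error, I would first write $f_x(y)=\E[c(y,Y)\mid x]=\langle c(y,\cdot),\calU_1\phi_1(x)\rangle_{\calH_2}$ and use the reproducing property together with Assumption~\ref{ass:two} to bound
\begin{equation*}
|f_x(y)-\hat f_x(y)|\ \le\ \|c(y,\cdot)\|_{\calH_2}\,\|\phi_1(x)\|_{\calH_1}\,\|\calU_1-\hat\calU_1\|_{\text{op}}.
\end{equation*}
Under Assumption~\ref{ass:three} the factor $\|\phi_1(x)\|_{\calH_1}\le 1$; compactness of $\calY$ together with $c$ being a metric (or a power greater than unity thereof) makes $\|c(y,\cdot)\|_{\calH_2}$ uniformly bounded in $y$. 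The operator norm $\|\calU_1-\hat\calU_1\|_{\text{op}}$ can be bounded by a uniform argument of the same flavor as Theorem~\ref{thm:sampcomp}: the optimal $\calU_1^*$ of (\ref{eqn:final}) lies in a feasible set whose deviation from the population feasible set is controlled by the empirical marginal/auto-covariance embedding errors, each of which converges at the dimension-free rate $O(1/\sqrt{\min(m,n)})$. Hence $\sup_{y\in\calY}|f_x(y)-\hat f_x(y)|$ is $O(1/\sqrt{\min(m,n)})$ with dimension-free constants.

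For the optimization error, I would invoke standard SGD convergence results on compact domains. The hypothesis that $c$ is a metric (or a power $>1$) gives a uniform Lipschitz/H\"older modulus of the integrand in the first argument over the compact set $\calY$, and hence bounded (sub)gradients of $\hat f_x$. Projected SGD therefore converges, in expectation, to the minimizer of $\hat f_x$ at a rate $O(1/\sqrt{T})$ in $T$ SGD iterations, with constants depending only on the diameter of $\calY$ and the Lipschitz constant of $c$, neither of which grows with the dimensionality of the underlying domain. Combining with the previous paragraph via a two-sided sandwich (approximate minimizers of $\hat f_x$ are approximate minimizers of $f_x$ when $\sup_y|f_x-\hat f_x|$ is small) yields the overall dimension-free sample complexity.

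The main technical obstacle is the last, seemingly routine, step: converting uniform closeness $\sup_y|f_x(y)-\hat f_x(y)|\le\epsilon$ into closeness of approximate minimizers. In general this requires a \emph{growth/identifiability} condition on $f_x$ around its minimizer; without it, $\argmin$ is discontinuous in the objective. For metric costs (or their powers), one can lean on strict convexity/Karcher-mean-style uniqueness of the Fr\'echet mean under mild conditions on $\calY$ to obtain a modulus of continuity for the $\argmin$ map. Making this quantitative — ideally with constants that remain dimension-free — is the delicate part; the remaining concentration and SGD bounds are essentially bookkeeping given Theorem~\ref{thm:sampcomp}.
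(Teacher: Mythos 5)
Your high-level decomposition (statistical error in the conditional embedding plus SGD optimization error, each dimension-free) is in the same spirit as the paper's argument, but the paper takes a shorter and somewhat different route for the second piece: it views the Barycentric projection $\min_{y\in\calY}\E\left[c(y,Y)/x\right]$ as a \emph{stochastic convex optimization} problem whose samples are the $y_j$ weighted by the estimated conditional likelihoods $\sum_{j}\beta^*_{ji}k_1(x_i,x)$, resamples these (with replacement) into $m'$ iid uniformly weighted samples, and then directly invokes the dimension-free learnability of Lipschitz convex stochastic optimization over a bounded domain (Shalev-Shwartz et al.), obtaining $O(1/\sqrt{m'})$ for the population objective of the SGD output. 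Convexity comes from the cost being a metric (or a power greater than unity) and Lipschitzness from compactness of $\calY$, exactly as in your optimization paragraph. The important difference is that the guarantee the paper extracts is on the \emph{objective value} of the SGD output, not on the distance of the output to the true minimizer; this is precisely why the argmin-stability obstacle you flag at the end never has to be confronted in the paper's proof. You are right that it is a genuine difficulty if one wants closeness of the maps themselves, and the paper does not resolve it either.

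The one step in your write-up that is a genuine gap rather than a stylistic difference is the chain $\sup_{y}|f_x(y)-\hat f_x(y)|\le\|c(y,\cdot)\|_{\calH_2}\|\phi_1(x)\|_{\calH_1}\|\calU_1-\hat{\calU}_1\|_{\mathrm{op}}$ combined with the claim that $\|\calU_1-\hat{\calU}_1\|_{\mathrm{op}}=O(1/\sqrt{\min(m,n)})$ ``by a uniform argument of the same flavor as Theorem~\ref{thm:sampcomp}.'' Theorem~\ref{thm:sampcomp} controls the difference of \emph{optimal objective values} of the empirical and population problems; it does not control the distance between the optimizers. Passing from objective-value consistency to operator consistency requires some identifiability or strong-convexity property of the population problem in $\calU_1$, which is not established: the data-fit term $\|\calU_1\mu_s-\mu_t\|^2$ is strongly convex only along the direction of $\mu_s$, not over the whole operator space. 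The paper's own proof leans on a similar unproved consistency assertion for the conditional operator, so this is not a defect unique to your argument, but to make your route rigorous you would need either a quantitative identifiability lemma or a restructuring, as in the paper, so that only the objective-value guarantee of Theorem~\ref{thm:sampcomp} and the generic SGD learnability bound are ever used.
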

The proof of this theorem is detailed in Appendix~\ref{app:sgd} and follows from standard results in stochastic convex optimization.

An advantage with our map estimator is that it can be computed even at out-of-sample $x\in\calX$. This is possible because of the implicit smoothing induced by the kernel.




\subsection{Some Special Cases}\label{sec:special}

Though (\ref{eqn:final}) can be solved using off-the-shelf convex solvers, the structure in the proposed problem can be exploited to derive further efficient solvers. Further speed-up may be obtained in the special case when $\vartheta_i=0$ in (\ref{eqn:final}). This simplifies the constraints corresponding to $\vartheta_1$ and $\vartheta_2$ in (\ref{eqn:final}) as $\alpha=(1/m)G_1\beta^\top$ and  $\alpha=(1/n)\gamma G_2$, respectively. Using this and re-writing (\ref{eqn:final}) in Tikhonov form leads to the following optimization problem:
\begin{equation}\label{eqn:special}
\begin{array}{cll}
\defmin_{\alpha\in\calA_{mn},\beta\in\R^{m\times n}\ge0,\gamma\in\R^{m\times n}\ge0}& tr(\alpha\calC^\top)& + \lambda_1\norm{\alpha\bone-\frac{1}{m}\bone}_{G_1}^2 + \lambda_2\norm{\alpha^\top\bone-\frac{1}{n}\bone}_{G_2}^2\\
&&+ \nu_1\norm{\alpha\bone-\frac{1}{m}\bone}_{G_1\odot G_1}^2 + \nu_2\norm{\alpha^\top\bone-\frac{1}{n}\bone}_{G_2\odot G_2}^2\\\\
\textup{s.t.\ \ \ \ \ \ \ \ \ \ \ \ }&& \alpha=\frac{1}{m}G_1\beta^\top, \alpha=\frac{1}{n}\gamma G_2,
\end{array}
\end{equation}
where $\calA_{mn}=\{x\in\R^{m\times n}\ |\ x\ge\bzero,\bone^\top x\bone=1\}$ and  $\lambda_i>0,\nu_i>0$ are the regularization hyper-parameter corresponding to $\Delta_i^2,\zeta_i^2$ in (\ref{eqn:final}).

The above form makes it very attractive for employing the ADMM~\cite{admm} algorithm for three reasons: i) the natural consensus form, ii) the constraints in terms of $\alpha,\beta,\gamma$ variables (apart from the consensus constraints) are simple and motivate efficient mirror descent~\cite{mirror} based solvers for the ADMM updates iii) the consensus in (\ref{eqn:special}) needs to applied only via regularization (and not an exact match), which is naturally facilitated in the ADMM updates. The updates for the ADMM are summarized below:

\begin{equation}\label{eqn:adm1}
\begin{array}{lll}
\alpha^{(k+1)} &\coloneqq &\argmin_{\alpha\in\calA_{mn}} \rho\norm{\alpha + \frac{1}{2}\left(D_1^{(k)}+D_2^{(k)} + \frac{\calC}{\rho} - \frac{\gamma^{(k)} G_2}{n} - \frac{G_1{\beta^{(k)}}^\top}{m}\right)}^2\\
& & \quad\quad\quad\quad + \lambda_1\norm{\alpha\bone-\frac{\bone}{m}}_{G_1}^2 + \lambda_2\norm{\alpha^\top\bone-\frac{\bone}{n}}_{G_2}^2,\nonumber\\
& & \quad\quad\quad\quad + \nu_1\norm{\alpha\bone-\frac{1}{m}\bone}_{G_1\odot G_1}^2 + \nu_2\norm{\alpha^\top\bone-\frac{1}{n}\bone}_{G_2\odot G_2}^2\\
\beta^{(k+1)} &\coloneqq &\argmin_{\beta\geq 0} \norm{\alpha^{(k+1)} + D_1^{(k)} - \frac{G_1\beta^\top}{m}}^2,\nonumber\\
\gamma^{(k+1)} &\coloneqq &\argmin_{\gamma\geq 0} \norm{\alpha^{(k+1)} + D_2^{(k)} - \frac{\gamma G_2}{n}}^2,\nonumber\\
D_1^{(k+1)} & \coloneqq & D_1^{(k)}+ \left(\alpha^{(k+1)} - \frac{G_1{\beta^{(k+1)}}^\top}{m}\right),\nonumber\\ 
D_2^{(k+1)} &\coloneqq & D_2^{(k)} + \left(\alpha^{(k+1)} - \frac{\gamma^{(k+1)} G_2}{n}\right),\nonumber
\end{array}
\end{equation}
%
%
where $D_1$ and $D_2$ are the dual variables corresponding to the constraints $\alpha=(1/m)G_1\beta^\top$ and $\alpha=(1/n)\gamma G_2$ in (\ref{eqn:special}), respectively. The optimization problems with respect to $\alpha,\beta,$ and $\gamma$ can be solved efficiently using popular algorithms like conditional gradient descent, mirror descent, co-ordinate descent, conjugate gradients, etc. Since the convergence rate of these algorithms is either independent or almost independent (logarithmically dependent) on the dimensionality of the problem, the computational cost (after neglecting log factors, if any) of solving for: $\alpha$ is $O(mn)$, $\beta$ is $O(m^2 n)$, and $\gamma$ is $O(mn^2)$. The updates for $D_1$ and $D_2$ have computational costs: $O(m^2 n)$ and $O(mn^2)$. Without loss of generality, if we assume $m\geq n$, the per iteration cost of ADMM is $O(m^3)$.

As noted earlier, in typical cases where the hyper-parameters $\Delta_i$ are small enough, explicit constraints $\beta\ge0,\gamma\ge0$ are not needed, leading to a further simplified formulation:
\begin{equation}\label{eqn:tok}
\begin{array}{cl}
\defmin_{\alpha\in\calA_{mn}}& tr(\alpha\calC^\top) + \lambda_1\norm{\alpha\bone-\frac{1}{m}\bone}_{G_1}^2 + \lambda_2\norm{\alpha^\top\bone-\frac{1}{n}\bone}_{G_2}^2+ \nu_1\norm{\alpha\bone-\frac{1}{m}\bone}_{G_1\odot G_1}^2 + \nu_2\norm{\alpha^\top\bone-\frac{1}{n}\bone}_{G_2\odot G_2}^2\\
\end{array}
\end{equation}
The above is a strongly convex problem with simplex constraints. The frank-wolfe and co-ordinate descent methods are known to lead to extremely scalable algorithms for such cases. For few Frank-Wolfe iterations, the computational cost is $O(mn)$ and the gram matrix computations require $O(m^2),O(n^2)$ respectively. In case $m=n$, this matches the linear complexity of the popular Sinkhorn algorithm used to solve the discrete OT problem.

\section{Related Work}
A popular strategy for performing continuous statistical OT is to simply employ the sample based plug-in estimates for the marginals. This reduces the statistical OT problem to the classical discrete OT problem, for which efficient algorithms exist~\cite{sinkhorn13,Jason19nystrom}. However, the sample complexity of the discrete OT based estimation is plagued with the curse of dimensionality~\cite{nilesweed2019estimation}.

Many approaches for alleviating the curse of dimensionality exist. For e.g.,~\cite{Genevay19sampComp} propose entropic regularization. However, their results (e.g., theorem~3~in~\cite{Genevay19sampComp}) show that the curse of dimensionality is not completely removed, especially if accurate solutions are desired. Empirical results in~\cite[refer Figures  4 and 5]{Feydy2018InterpolatingBO} confirm that the quality of the solution degrades very quickly with entropic regularization.~\cite{nilesweed2019estimation,Forrow19facCoup} make low-rank assumptions, which may not be satisfied in all applications. Further, as per theorem~1~in~\cite{nilesweed2019estimation}, the dependence on $d$ still exists. Similar comments hold for~\cite{hutter2020minimax}, which makes a smoothness assumption.

While the approach of~\cite{Genevay16sgd} efficiently estimates the optimal dual objective, recovering the optimal transport plan from the dual's solution again requires the knowledge of the exact marginals (refer proposition~2.1~in~\cite{Genevay16sgd}). Since estimating distributions in high-dimensional settings is known to be challenging, this alternative is not attractive for applications where the transport plan is required, e.g., domain adaptation~\cite{Courty17domAda},ecological inference~\cite{ecoinf17}, data alignment~\cite{alvarez18wordEmb} etc.


On passing we note that though there are existing works that employ kernels in context of OT~\cite{Genevay16sgd,Perrot16outofsamp,zhangotkernel19,oh2019kernel}, none of them use the notion of kernel embedding of distributions and limit the use of kernels to either function approximation or computing the MMD distance. Though relations betweeen Wasserstein and MMD distance~\cite{Feydy2018InterpolatingBO} exist, none of them explore regularization with MMD distances.


\begin{figure}
\begin{center}
	\begin{minipage}[t]{.32\textwidth}
	\begin{center}
		\includegraphics[width=\textwidth]{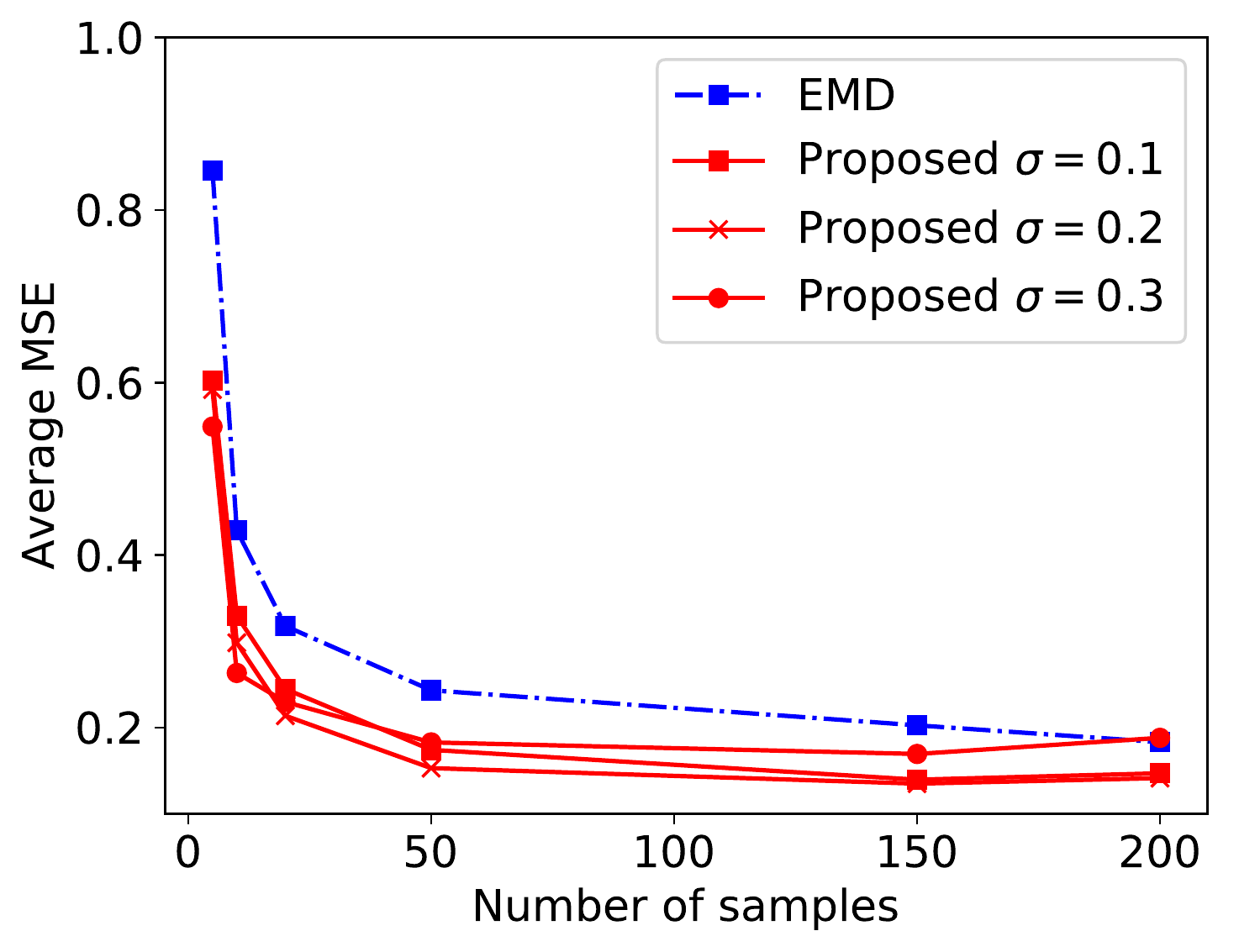}\\
		{\scriptsize  (a) Dimension $d=10$.}
	\end{center} 
	\end{minipage}
	\begin{minipage}[t]{.32\textwidth}
	\begin{center}
		\includegraphics[width=\textwidth]{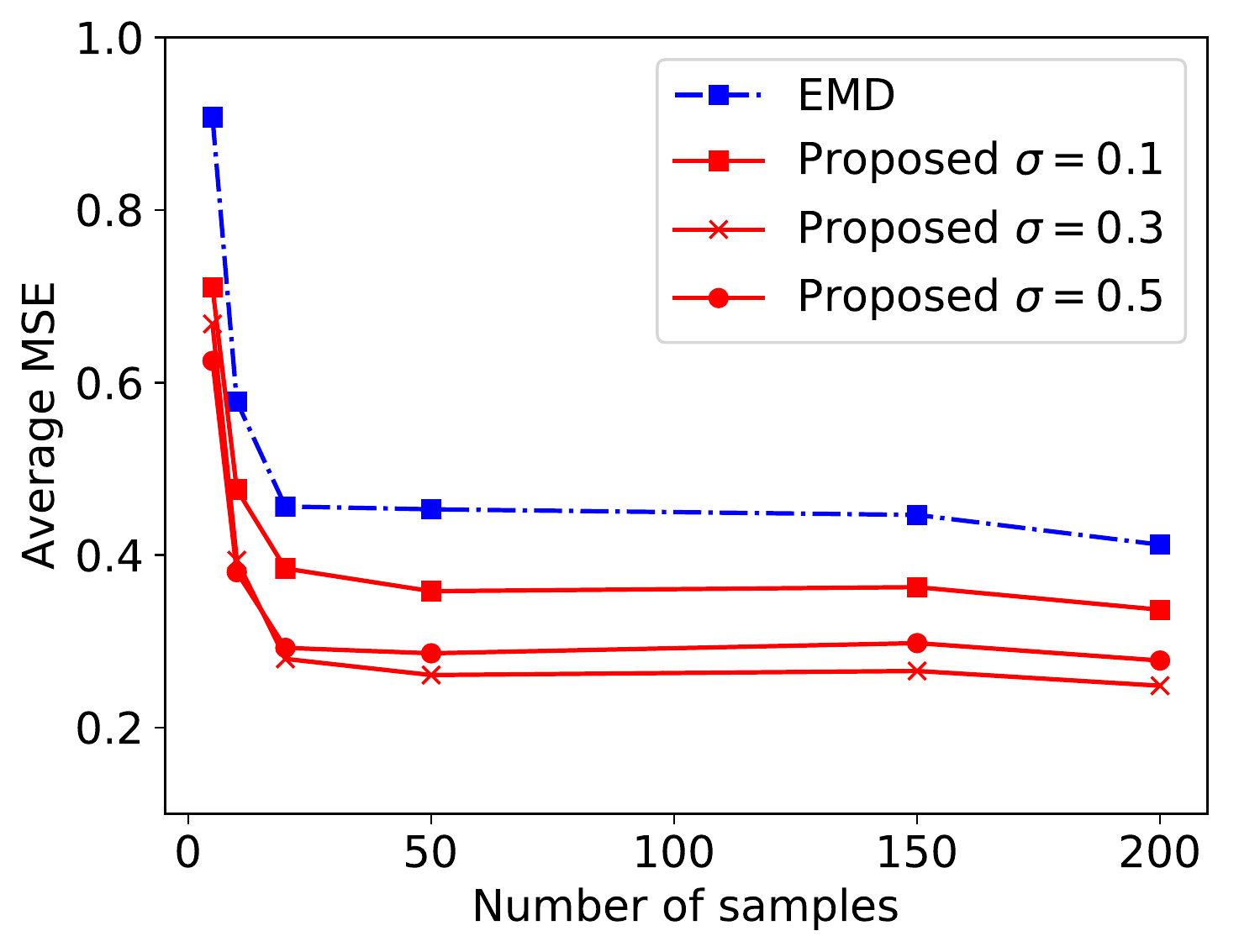}\\
		{\scriptsize  (b) Dimension $d=100$.}
	\end{center} 
	\end{minipage}
	\begin{minipage}[t]{.32\textwidth}
	\begin{center}
		\includegraphics[width=\textwidth]{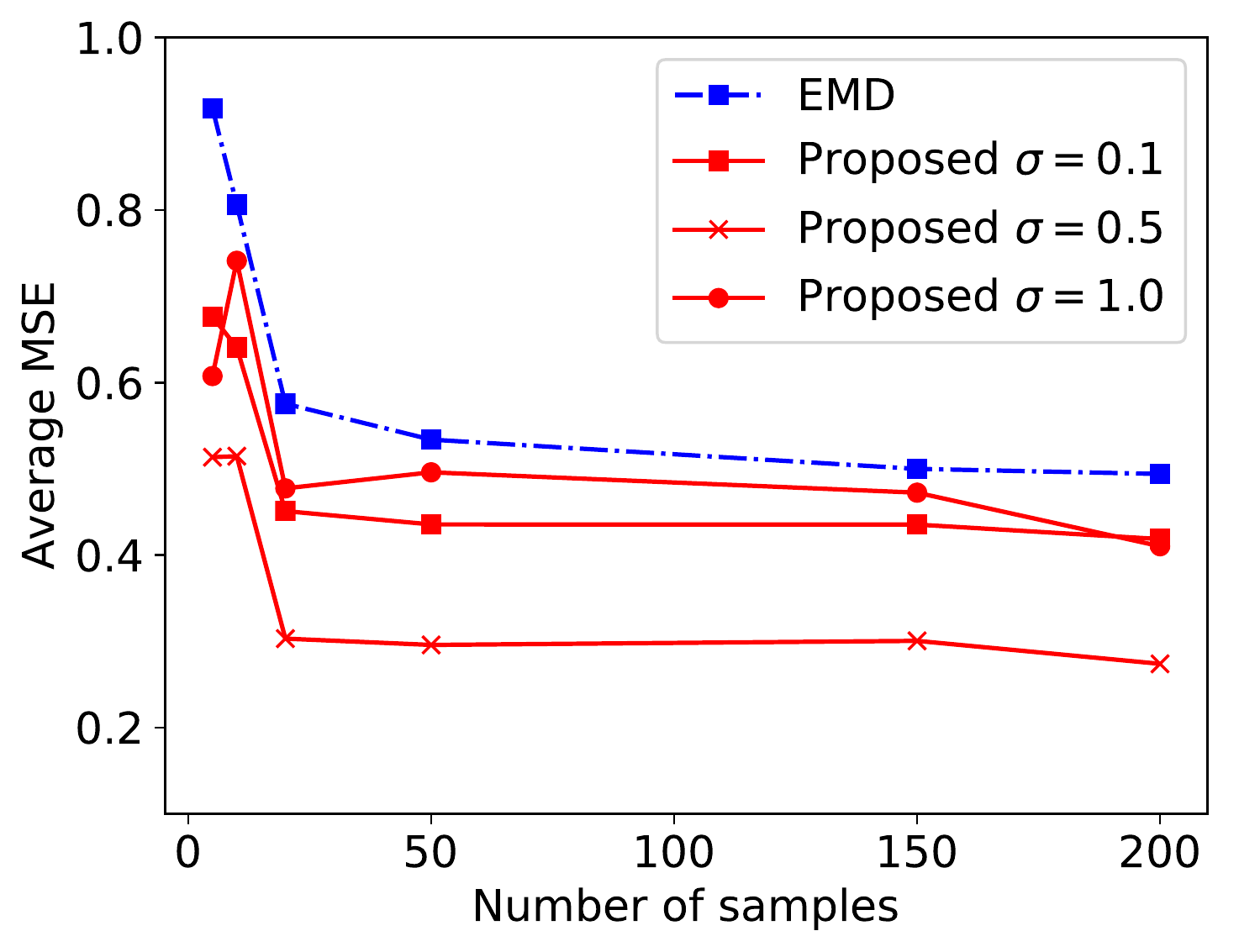}\\
		{\scriptsize  (c) Dimension $d=1000$.}
	\end{center} 
	\end{minipage}
	\caption{Performance on the proposed estimator for the transport map (\ref{eqn:inf}) and the discrete OT estimator, EMD, on the problem of learning the optimal transport map between two multivariate Gaussian distributions. We observe that with suitable $\sigma$, the proposed estimator outperforms EMD especially in higher dimensions. }
\label{fig:mse-results}
\end{center}
\end{figure}

\begin{figure}
\begin{center}
	\begin{minipage}[t]{.32\textwidth}
	\begin{center}
		\includegraphics[width=\textwidth]{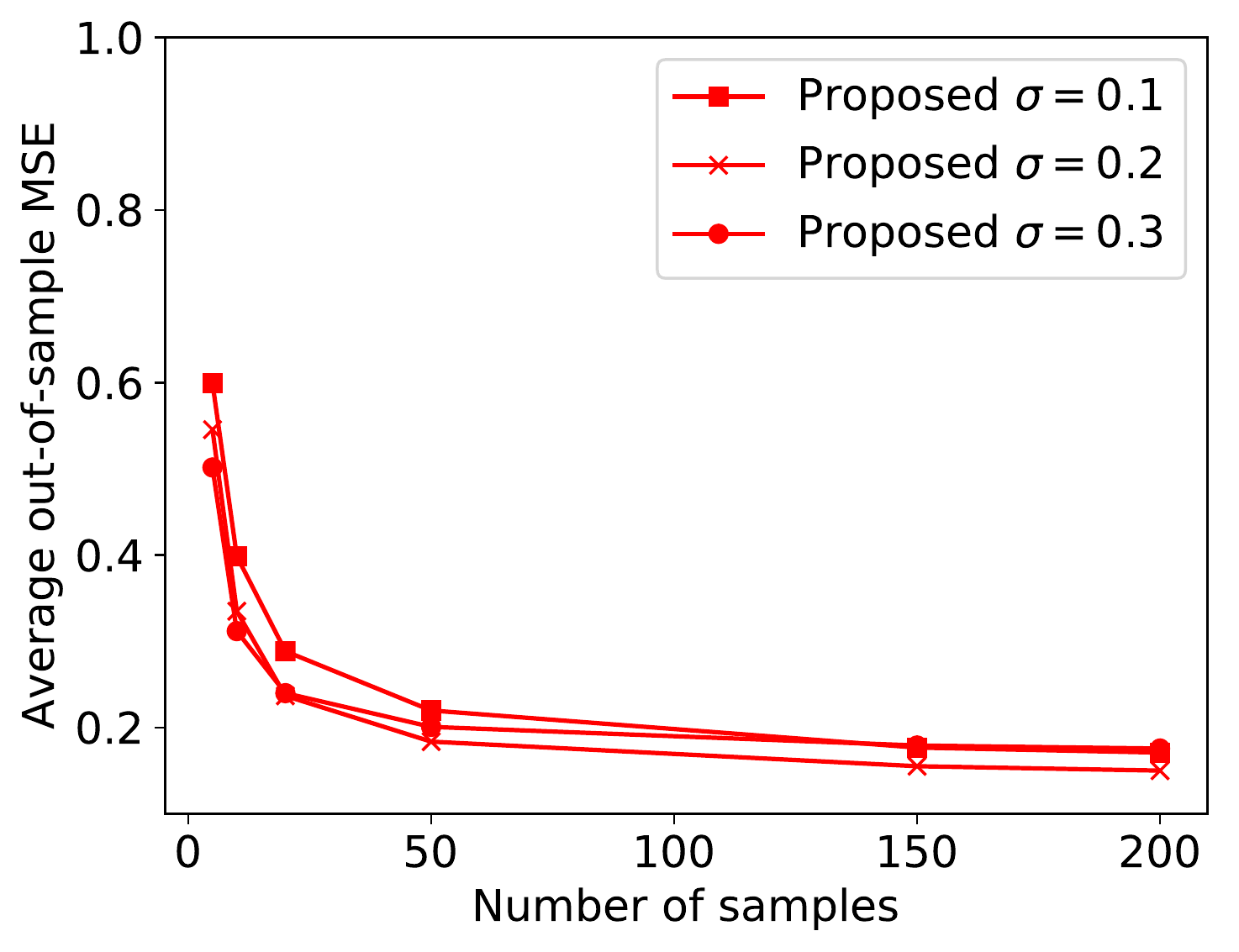}\\
		{\scriptsize  (a) Dimension $d=10$.}
	\end{center} 
	\end{minipage}
	\begin{minipage}[t]{.32\textwidth}
	\begin{center}
		\includegraphics[width=\textwidth]{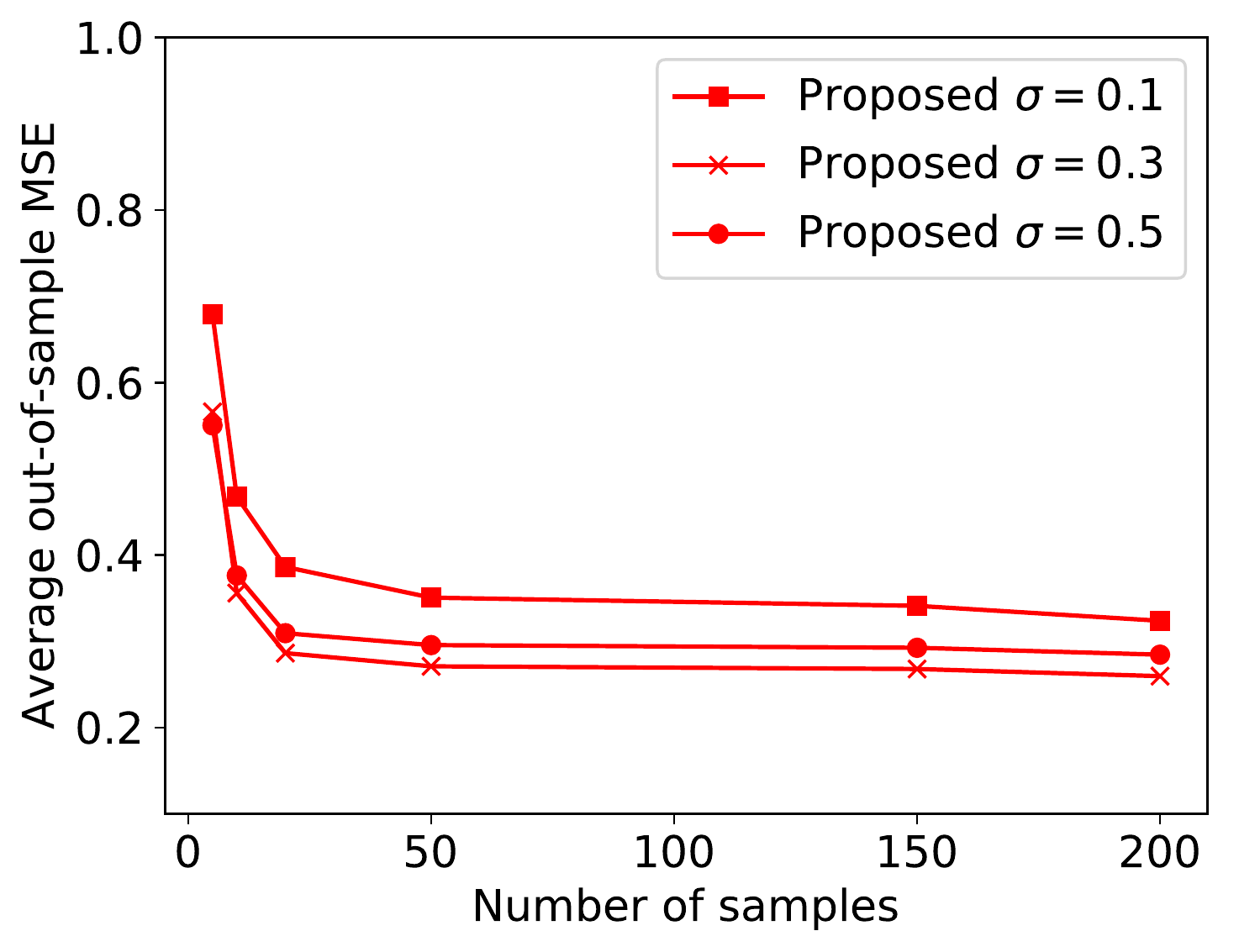}\\
		{\scriptsize  (b) Dimension $d=100$.}
	\end{center} 
	\end{minipage}
	\begin{minipage}[t]{.32\textwidth}
	\begin{center}
		\includegraphics[width=\textwidth]{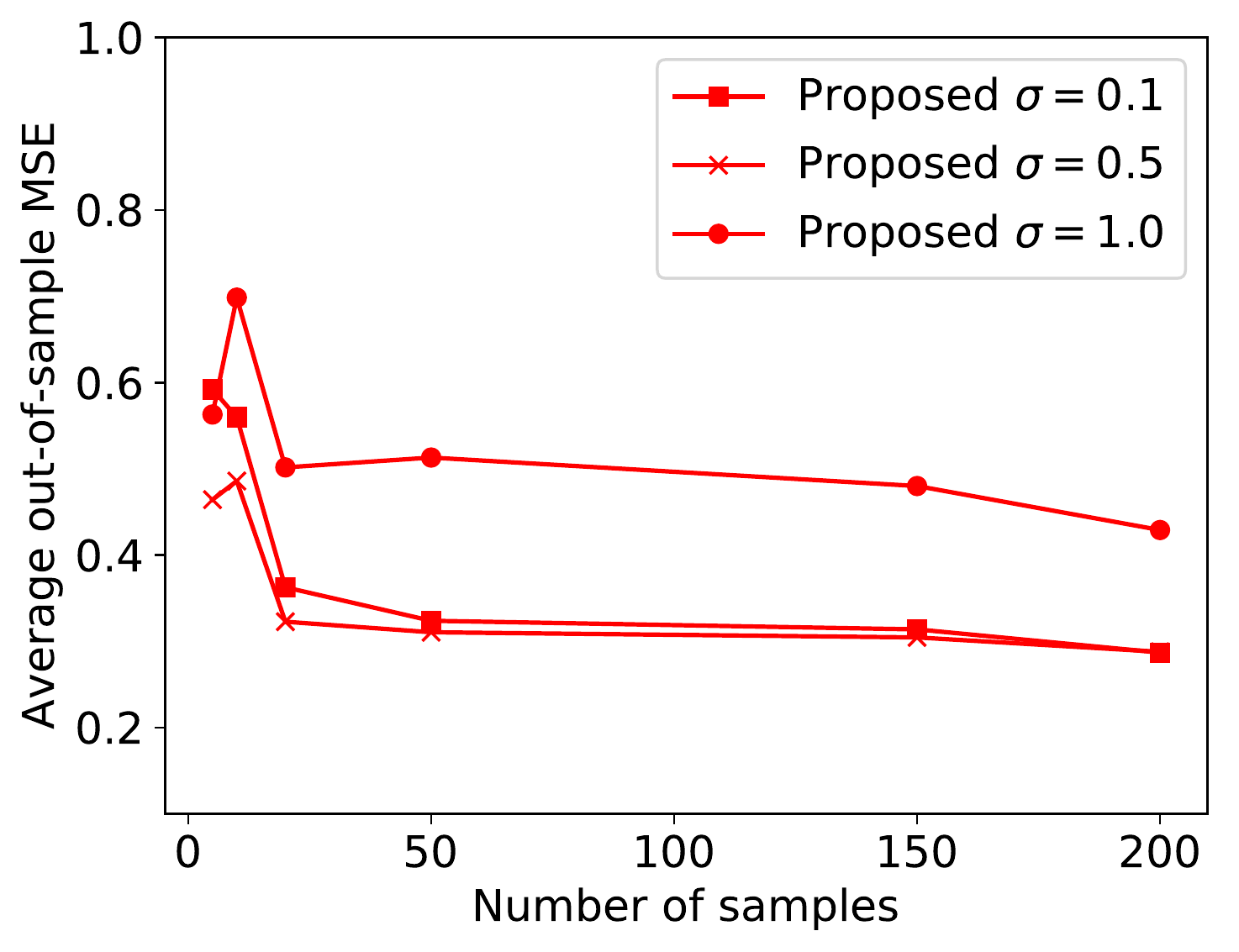}\\
		{\scriptsize  (c) Dimension $d=1000$.}
	\end{center} 
	\end{minipage}
	\caption{Average out-of-sample mean square error (MSE) obtained by the proposed approach on the problem of learning the optimal transport map between two multivariate Gaussian distributions. In general, the average out-of-sample MSE decrease with increasing number of data points sampled to learn the estimator (the x-axis). We also observe that the best results obtained by proposed solution is robust to the dimensionality of the data points. }
\label{fig:mse-results-oos}
\end{center}
\end{figure}


\section{Experiments}\label{sec:experiments}
We evaluate our estimator for the transport map (\ref{eqn:inf}) on both synthetic and real-world datasets. Our code is available at \url{https://www.iith.ac.in/~saketha/research.html}. 

\subsection{Learning OT map between multivariate Gaussian distributions}
The optimal transport map between two Gaussian distributions  $g_{source}=N(m_1,\Sigma_1)$ and $g_{target}=N(m_2,\Sigma_2)$ with squared Euclidean cost has a closed form expression \cite{CompOT} given by $T:x\mapsto m_2 + A(x-m_1)$, 
where $A=\Sigma_1^{-\frac{1}{2}}(\Sigma_1^{\frac{1}{2}}\Sigma_2\Sigma_1^{\frac{1}{2}})^{\frac{1}{2}}\Sigma_1^{-\frac{1}{2}}$. 
We compare the proposed estimator (\ref{eqn:inf}) in terms of the deviation from the optimal transport map. 

\textbf{Experimental setup}: We consider mean zero Gaussian distributions with unit-trace covariances. The covariance matrices are computed as $\Sigma_1=V_1 V_1^\top/\|V_1\|_F$ and  $\Sigma_2=V_2 V_2^\top/\|V_2\|_F$, where $V_1\in\R^{d\times d}$ and $V_2\in\R^{d\times d}$ are generated randomly from the uniform distribution. We experiment with varying dimensions and number of data-points: $d\in\{10,100,1000\}$,  $m\in\{10,20,50,100,150,200\}$, and we set $n=m$ for simplicity. 
For each dimension $d$, we randomly generate a source-target distribution pair. Subsequently, the source and target datasets (of size $m$) are randomly generated from their respective distributions. For a every $(d,m)$, we repeat the experiments five times and report the average mean square error (MSE) results results in Figures \ref{fig:mse-results} and \ref{fig:mse-results-oos}. 
 
\textbf{Methods}: The proposed approach employs the Gaussian kernels, $k(x,z)=\exp(-\|x-z\|^2/2\sigma^2)$. We chose the same $\sigma$ values for the kernels over the source and the target domains ($k_1$ and $k_2$, respectively). Initial experiments indicate that suitable values of $\sigma$ include those that does not yield high condition number of the Gram matrices (i.e, the Gram matrices are not ill-conditioned). In our setup, in general, the condition number of the Gram matrices increase with $\sigma$ for a fixed $d$ and decrease with $d$ for a fixed $\sigma$. The $\sigma$ values used in various experiments are mentioned with the results. As a baseline, we also report the results obtained from the discrete OT estimator, henceforth referred to as EMD, learned via the discrete OT problem (\ref{eqn:discOT}). 


\textbf{Evaluation}: 
For a given data point $x_s$ from the source distribution, a transport map estimator maps $x_s$ to a data point $x_t$ in the target distribution. Such a  mapping obtained from the optimal transport map  (15) is considered as the ground truth. The proposed estimator (8) and the EMD are evaluated in terms of the mean squared error (MSE) from the ground truth.

\textbf{Results}: The results of our first set of experiments are reported in Figures~\ref{fig:mse-results}(a)-(c). We observe that the proposed estimator obtains lower average MSE (and hence better estimation of the transport map) than EMD across different number of samples $m$ and dimensions $d$. The advantage of the proposed estimator over the baseline is more pronounced at higher dimension. 


\textbf{Out-of-sample evaluation}: We also evaluate our estimator's ability to map  out-of-sample data by sampling additional $m_{oos}=200$ points from the source distributions in the above experiments. These source points are not used to learn the estimator and are only used for evaluation during the inference stage. 
The results on out-of-sample dataset, corresponding to the first set of experiments (Figure~\ref{fig:mse-results}) are reported in Figure~\ref{fig:mse-results-oos}. We generate out-of-sample data points for each $(d,s)$ pair, where $d$ is the dimension of the data points and $s$ is the random seed (corresponding to five repetition  discussed earlier). Hence, for a given $(d,s)$ pair, different estimators learned with varying $m$ are evaluated on the same set of out-of-sample data points.

We observe that the performance on out-of-sample data points are similar to the in-sample data points. The average out-of-sample MSE generally decreases with increasing number of (training) samples since a better estimator is learned with more number of samples. Overall, the results illustrate the utility of the proposed approach for out-of-sample estimation. It should be noted that the baseline EMD cannot map out-of-sample data points.

\begin{table*}
\caption{Accuracy obtained on the target domains of the Office-Caltech dataset. The knowledge transfer to the target domain happens via in-sample source data-points, i.e., those source data-points using which the transport plan was learned. }\label{table:officeCaltech}
\centering
\begin{tabular}{lllll}
\toprule
\lhead{Task} & \thead{EMD}  & \thead{OTLin \cite{perrot16a}} & \thead{OTKer \cite{perrot16a}}  & \thead{Proposed}\\
\cmidrule(r){1-1}
\cmidrule(r){2-5}
$A\rightarrow C$ & $80.68\pm1.82$ & $82.92\pm1.41$ & $83.07\pm0.63$ & $	\mathbf{85.24\pm1.95}$\\
$A\rightarrow D$ & $72.66\pm6.58$ & $82.28\pm5.66$ & $82.53\pm3.70$ & $	\mathbf{84.05\pm5.03}$\\
$A\rightarrow W$ & $69.05\pm5.08$ & $\mathbf{77.70\pm3.60}$ & $76.35\pm4.16$ & $ 77.57\pm3.68$\\
$C\rightarrow A$ & $82.61\pm3.45$ & $88.31\pm0.94$ & $88.09\pm1.50$ & $	\mathbf{90.49\pm1.24}$\\
$C\rightarrow D$ & $68.35\pm10.06$ & $\mathbf{79.75\pm6.04}$ & $78.99\pm7.95$ & $ 77.97\pm7.19$\\
$C\rightarrow W$ & $65.54\pm2.74$ & $71.89\pm3.43$ & $70.00\pm3.93$ & $	\mathbf{74.46\pm1.21}$\\
$D\rightarrow A$ & $81.50\pm1.99$ & $88.57\pm1.86$ & $85.23\pm1.71$ & $	\mathbf{91.09\pm1.58}$\\
$D\rightarrow C$ & $76.51\pm2.87$ & $82.17\pm1.70$ & $78.22\pm1.80$ & $	\mathbf{86.52\pm1.19}$\\
$D\rightarrow W$ & $91.89\pm2.96$ & $\mathbf{97.57\pm1.09}$ & $96.35\pm1.10$ & $ 96.35\pm1.95$\\
$W\rightarrow A$ & $71.22\pm1.54$ & $80.00\pm1.74$ & $76.23\pm2.50$ & $	\mathbf{87.54\pm4.02}$\\
$W\rightarrow C$ & $69.55\pm3.18$ & $77.58\pm2.34$ & $73.72\pm2.59$ & $	\mathbf{80.96\pm3.07}$\\
$W\rightarrow D$ & $80.76\pm4.90$ & $\mathbf{97.72\pm1.47}$ & $96.20\pm2.89$ & $ 96.20\pm3.10$\\
\midrule
Average & $75.86\pm1.43$ & $83.87\pm0.38$ & $	82.08\pm0.95$ & $	\mathbf{85.70\pm0.89}$\\
\bottomrule
\end{tabular}
\end{table*}

\subsection{Domain adaptation}
We experiment on the Caltech-Office dataset \cite{gong12a}, which contains images from four domains: Amazon (online retail), the Caltech image dataset, DSLR (images taken from a high resolution DSLR camera), and Webcam (images taken from a webcam). The domains vary with respect to factors such as background, lightning conditions, noise, etc. The number of examples in each domain is: $958$ (Amazon), $1123$ (Caltech), $157$ (DSLR), and $295$ (Webcam). Each domain has images from ten classes and in turn is considered as the source or the target domain. 
We perform multi-class classification in the target domain given the labeled data only from the source domain. Using OT, we first transport the labeled source domain data-points to the target domain and then learn a classifier for the target domain using the adapted source data-points. 
Overall, there are twelve adaptation tasks (e.g., task $A\rightarrow C$ has Amazon as the source and Caltech as the target domain). We employ DeCAF6 features to represent the images \cite{donahue14a,perrot16a,courty17b}.

\textbf{Experimental setup}: For learning transport plan, we randomly select ten images per class for the source domain (eight per class when DSLR is the source, due to its sample size). The remaining samples of the source domain is marked as out-of-sample source data-points. 
The target domain is partitioned equally into training and test sets. The transport map is learned using the source-target training sets. The `in-sample' accuracy is then evaluated on the target's test set. 
We also evaluate the quality of our out-of-sample estimation as follows. Instead of projecting the source training set samples onto the target domain, we project only the out-of-sample (OOS) source data-points and compute the accuracy over the target's test set. It should be noted that the transport model has not been learned on the OOS data-points, such mappings may not be as accurate as the in-sample mapping. The OOS evaluation assesses the downstream  effectiveness of OOS estimation on domain adaptation. Out-of-sample estimation is especially attractive in big data and online applications.
The classification in the target domain is performed using a 1-Nearest Neighbor classifier \cite{gong12a,perrot16a,courty17b}. The above experimentation is performed five times. The average in-sample and out-of-sample accuracy are reported in Tables~\ref{table:officeCaltech}~\&~\ref{table:officeCaltech-oos}, respectively. 

\textbf{Methods}: We compare our approach with EMD, OTLin~\cite{perrot16a}, and OTKer~\cite{perrot16a}. Both OTLin and OTKer aim to solve the discrete optimal transport problem and also learn a transformation approximating the corresponding transport map in a joint optimization framework. OTLin learns a linear transformation while OTKer learns a non-linear transformation (e.g., via Gaussian kernel). The learned transformation allows OTLin and OTKer to perform out-of-sample estimation as well. 
Both OTLin and OTKer employ two regularization parameters. As suggested by their authors~\cite{perrot16a}, both the regularization parameters were chosen from the set $\{10^{-3}, 10^{-2}, 10^{-1}, 10^{0}\}$. It should be noted that best regularization parameters were selected for each task. OTKer additionally requires Gaussian kernel's hyper-parameter $\sigma$, which was chosen from the set $\{0.1, 0.5, 1, 5, 10\}$. We use the Python Optimal Transport (POT) library (\url{https://github.com/PythonOT/POT}) implementations of OTLin and OTKer in our experiments. For the proposed approach, as in the previous experiments, we chose the Gaussian kernels and have same $\sigma$ values for the kernels over the source and the target domains. The $\sigma$ for our approach was also chosen from the set $\{0.1, 0.5, 1\}$. 

\textbf{Results}: We observe from Tables~\ref{table:officeCaltech}~\&~\ref{table:officeCaltech-oos} that the proposed approach outperforms the baselines, obtaining the best in-sample and out-of-sample (OOS) accuracy. 
As discussed, the in-sample accuracy is likely to be better than out-of-sample accuracy (for any approach). 
Interestingly, for a few tasks with Amazon and Caltech as the source domains, the OOS accuracy of our approach is comparable to our in-sample accuracy. In these domains, the OOS set is larger than the training set. The proposed OOS estimation is able to exploit this and provide an effective knowledge transfer. Conversely, we observe a drop in our OOS accuracy (when compared with the corresponding in-sample accuracy) in tasks with DSLR and Webcam as the source domains since the size of OOS set is quite small and hence lesser potential for knowledge transfer. On the other hand, OTLin suffers a significant drop in OOS performance, likely due the the overfitting of the learned linear transformation on the source training points. While OTKer has better OOS performance than OTLin, it has more variance between in-sample and out-of-sample performance than the proposed approach. 

\begin{table*}
\caption{Accuracy obtained on the target domains of the Office-Caltech dataset. The knowledge transfer to the target domain happens via out-of-sample source data-points, i.e., those source data-points which were not used for learning the transport plan. }\label{table:officeCaltech-oos}
\centering
\begin{tabular}{llll}
\toprule
\lhead{Task} & \thead{OTLin \cite{perrot16a}} & \thead{OTKer \cite{perrot16a}}  & \thead{Proposed}\\
\cmidrule(r){1-1}
\cmidrule(r){2-4}
$A\rightarrow C$ & $56.75\pm2.94$ & $79.11\pm2.76$ & $	\mathbf{84.42\pm1.93}$\\
$A\rightarrow D$ & $79.49\pm1.86$ & $82.79\pm2.06$ & $	\mathbf{84.81\pm1.55}$\\
$A\rightarrow W$ & $55.41\pm6.60$ & $76.35\pm1.66$ & $	\mathbf{82.16\pm2.93}$\\
$C\rightarrow A$ & $87.79\pm3.28$ & $84.54\pm2.78$ & $	\mathbf{90.71\pm1.11}$\\
$C\rightarrow D$ & $\mathbf{81.01\pm3.75}$ & $74.94\pm3.53$ & $78.73\pm5.76$\\
$C\rightarrow W$ & $70.00\pm3.64$ & $68.11\pm1.16$ & $	\mathbf{75.27\pm4.93}$\\
$D\rightarrow A$ & $64.53\pm5.01$ & $81.95\pm2.69$ & $	\mathbf{87.56\pm3.10}$\\
$D\rightarrow C$ & $43.67\pm4.80$ & $72.79\pm3.04$ & $	\mathbf{81.33\pm1.56}$\\
$D\rightarrow W$ & $\mathbf{90.04\pm2.81}$ & $82.02\pm0.72$ & $ 89.60\pm2.13$\\
$W\rightarrow A$ & $60.09\pm4.77$ & $73.88\pm2.83$ & $	\mathbf{79.10\pm4.68}$\\
$W\rightarrow C$ & $49.34\pm8.78$ & $63.17\pm4.14$ & $	\mathbf{76.83\pm1.91}$\\
$W\rightarrow D$ & $\mathbf{95.95\pm1.86}$ & $90.89\pm1.68$ & $93.16\pm1.44$\\
\midrule
Average & $69.51\pm2.70$ & $77.54\pm0.66$ & $\mathbf{83.64\pm0.38}$\\
\bottomrule
\end{tabular}
\end{table*}

\section{Conclusions}\label{sec:concsum}
The idea of employing kernel embeddings of distributions in OT seems promising, especially in the continuous case. It not only leads to sample complexities that are dimension-free, but also provides a new regularization scheme based on MMD distances, which is complementary to existing $\phi$-divergence based regularization.

While the optimal solution of the proposed MMD regularized formulation recovers the transport plan, the objective value does not seem to have any special use. On the contrary, it has been shown that with entropic, $\phi$-divergence based regularizations the optimal objectives lead to notions of Sinkhorn divergences~\cite{Feydy2018InterpolatingBO} and Hillinger-Kantorovich metrics~\cite{Liero2018}. We make an initial observation that in the special case in section~\ref{sec:special}, the objective in (\ref{eqn:special}), resembles that defining the Hillinger-Kantorovich metrics very closely. Hence, we conjecture that our optimal objective in this special case may also define a new family of metrics. However, we postpone such connections (if any) to future work.
\appendix
\section{Proof for Theorem \ref{thm:sampcomp}}\label{app:samplecomplexity}
\begin{proof}
Let $\hath$ denote the objective in (\ref{eqn:empemb0}), when written in Tikhonov form, as a function of variables $\calU\in\calE_{21},\calU_1\in\calL_{12},\calU_2\in\calL_{21}$ and let $h$ denote that when the true embeddings are employed instead of their estimates. In particular, we have $\hath\left(\hatcalU,\hatcalU_1,\hatcalU_2\right)= g\left(\hatc,\hatmu_s,\hatmu_t,\hatSigma_{ss},\hatSigma_{tt}\right), h\left(\calU^*,\calU^*_1,\calU^*_2\right)= g\left(c,\mu_s,\mu_t,\Sigma_{ss},\Sigma_{tt}\right)$, where $\hatcalU,\hatcalU_1,\hatcalU_2$ and $\calU^*,\calU^*_1,\calU^*_2$ are optimal solutions to respective problems.

We begin by noting that the feasibility set of (\ref{eqn:empemb0}) is bounded. This is because: i) the set $\calE_{21}$ is bounded. This is true as $\calU\in\calE_{21}\Rightarrow$ there exists $p\in\calM^1(\calX\times\calY)$ such that $\|\calU\|=\|\E_{(X,Y)\sim p}\left[\phi_1(X)\otimes\phi_2(Y)\right]\|\le\E_{(X,Y)\sim p}\left[\|\phi_1(X)\otimes\phi_2(Y)\|\right]=1$. The first inequality follows from Jensens inequality and the second equality is true  for any bounded kernel like Gaussian and the Kroncker Delta. ii) By triangle inequality,  $\left|\|\calU\|-\|\hatSigma_{ss}\calU_1\|\right|\le\left\|\calU-\hatSigma_{ss}\calU_1^\top\right\|\le\vartheta_1$. This shows that the set of all feasible $\hatSigma_{ss}\calU_1$ is bounded, since $\calU$ is itself bounded in the feasibility set. Now, since $maxeig(\hatSigma_{ss})=maxeig(G_2)/n\le tr(G_2)/n=1$ (again true for Kronecker and Gaussian kernels), we obtain that set of all feasible $\calU_1$ is also bounded. Similarly, set of all feasible $\calU_2$ is bounded. Accordingly, we define $\calB\left(\vartheta_1,\vartheta_2\right)\equiv\left\{\left(\calU\in\calE_{21},\calU_1\in\calL_{12},\calU_2\in\calL_{21}\right)\ \left|\ \|\calU\|\le1,\|\calU_1\|\le1+\vartheta_1,\|\calU_2\|\le1+\vartheta_2\right.\right\}$. By the above argument, it is clear that there is no loss of generality in further restricting the search space to that with intersection with this bounded set, $\calB\left(\vartheta_1,\vartheta_2\right)$, and always $\left(\calU^*,\calU_1^*,\calU_2^*\right),\left(\hatcalU,\hatcalU_1,\hatcalU_2\right)\in\calB\left(\vartheta_1,\vartheta_2\right)$ for any $m,n\in\N$.

The rest of the proof follows from the claim below:
\begin{clm}
The uniform bound:
$$\max_{\left(\calU,\calU_1,\calU_2\right)\in\calB\left(\vartheta_1,\vartheta_2\right)}\left|\hath\left(\calU,\calU_1,\calU_2\right)-h\left(\calU,\calU_1,\calU_2\right)\right|\le O\left(1/\sqrt{\min(m,n)}\right)$$
holds, where the constants in the RHS are dimension-free.
\end{clm}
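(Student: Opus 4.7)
The plan is to write both $h$ and $\hath$ in Tikhonov form so that their difference decomposes as a sum of one linear term $\langle c,\calU\rangle$ versus $\langle \hatc,\calU\rangle$ plus six squared-norm residuals matching the six inequality constraints of (\ref{eqn:empemb0}). Over the bounded set $\calB(\vartheta_1,\vartheta_2)$, each residual can be treated independently via the elementary identity $\|a\|^2 - \|b\|^2 = (\|a\|-\|b\|)(\|a\|+\|b\|)$, followed by reverse triangle inequality on the first factor and a uniform bound on the second factor. This factorization converts each term into a product of (i) a constant depending only on $\vartheta_1,\vartheta_2$ and the regularization weights, and (ii) a norm of the form $\|\hatmu_s-\mu_s\|_{\calH_1}$, $\|\hatmu_t-\mu_t\|_{\calH_2}$, $\|\hatSigma_{ss}-\Sigma_{ss}\|_{HS}$, $\|\hatSigma_{tt}-\Sigma_{tt}\|_{HS}$, or $\|\hatc-c\|_{\calH_1\otimes\calH_2}$. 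For instance, $\|\calU_1\hatmu_s-\hatmu_t\|^2 - \|\calU_1\mu_s-\mu_t\|^2$ will be bounded by $[(1+\vartheta_1)\|\hatmu_s-\mu_s\|+\|\hatmu_t-\mu_t\|]$ times a sum of norms that is finite because $\|\calU_1\|\le 1+\vartheta_1$. The linear term $\langle\hatc-c,\calU\rangle$ is handled by Cauchy--Schwarz together with $\|\calU\|\le 1$. Crucially none of these constants depends on $(\calU,\calU_1,\calU_2)$, so a single sup bound over $\calB$ is obtained.

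Next I would invoke the standard concentration results for kernel mean embeddings under Assumption~\ref{ass:three} (bounded kernels): McDiarmid/Hoeffding in Hilbert space gives $\|\hatmu_s-\mu_s\|_{\calH_1}\le O(1/\sqrt{m})$ and $\|\hatmu_t-\mu_t\|_{\calH_2}\le O(1/\sqrt{n})$ with high probability and with constants that are completely independent of the ambient dimension (they depend only on the uniform bound on the kernels). The same results applied to the tensorized feature maps $\phi_1\otimes\phi_1$ and $\phi_2\otimes\phi_2$ yield the analogous dimension-free Hilbert--Schmidt bounds $\|\hatSigma_{ss}-\Sigma_{ss}\|_{HS}\le O(1/\sqrt{m})$ and $\|\hatSigma_{tt}-\Sigma_{tt}\|_{HS}\le O(1/\sqrt{n})$. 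Combining with the uniform reduction above gives the overall $O(1/\sqrt{\min(m,n)})$ rate claimed in the lemma for all terms that do not involve $c$.

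The main obstacle is the term $\|\hatc-c\|_{\calH_1\otimes\calH_2}$, because $\hatc$ is the orthogonal projection of $c$ onto the finite-dimensional subspace spanned by $\{\phi_1(x_i)\otimes\phi_2(y_j)\}$ rather than an i.i.d.\ average. The strategy is to relate this projection error to the kernel-embedding estimation error of $\hatSigma_{ss}\otimes\hatSigma_{tt}$. Since $c\in\calH_1\otimes\calH_2$ by Assumption~\ref{ass:two}, one can write $c$ as a limit of finite tensor expansions and express the orthogonal projection in terms of the empirical covariance operators; the projection error is then controlled by the operator-norm deviation of $\hatSigma_{ss}\otimes\hatSigma_{tt}$ from $\Sigma_{ss}\otimes\Sigma_{tt}$ (after an appropriate regularization), for which the same Hilbert--Schmidt concentration argument of the previous paragraph applies and again yields an $O(1/\sqrt{\min(m,n)})$ bound with dimension-free constants. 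Assembling the seven bounds with a union bound over the (constant number of) high-probability events completes the proof of the uniform inequality stated in the claim, and therefore of the theorem.
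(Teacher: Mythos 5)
Your treatment of the six regularizer terms is sound and essentially the paper's own argument: the factorization $\|a\|^2-\|b\|^2=(\|a\|-\|b\|)(\|a\|+\|b\|)$ plus the reverse triangle inequality, combined with Hilbert-space concentration for $\hatmu_s,\hatmu_t$ and for the tensorized embeddings $\hatSigma_{ss},\hatSigma_{tt}$, gives uniform dimension-free $O(1/\sqrt{\min(m,n)})$ control over $\calB(\vartheta_1,\vartheta_2)$, exactly as in the paper (which works with the unsquared norms and a plain triangle inequality, a cosmetic difference).

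The genuine gap is in the cost term, and you correctly sensed it is the obstacle but your proposed fix would fail. Bounding $|\langle \hatc-c,\calU\rangle|$ by Cauchy--Schwarz requires $\|\hatc-c\|_{\calH_1\otimes\calH_2}\le O(1/\sqrt{mn})$, i.e.\ an RKHS-\emph{norm} rate for approximating a fixed element $c$ by the span of the $mn$ sampled sections $\phi_1(x_i)\otimes\phi_2(y_j)$. No such dimension-free rate holds in general: the random-basis approximation results give decay only in $L_2(\baru)$, not in the RKHS norm, and the projection residual $\|\hatc-c\|_{\calH_1\otimes\calH_2}$ is merely bounded by $\|c\|$ without any quantitative decay. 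Your suggestion to control this residual via the Hilbert--Schmidt deviation of $\hatSigma_{ss}\otimes\hatSigma_{tt}$ from $\Sigma_{ss}\otimes\Sigma_{tt}$ does not supply the missing link; there is no inequality of that form. The paper's route is different and is the crux of the claim: because $\calU\in\calE_{21}$ is the embedding of a probability measure $u$, the linear term is an expectation, $\langle \hatc-c,\calU\rangle=\E_u[\hatc-c]$, so Jensen's inequality reduces the problem to $\sqrt{\E_u\left[(\hatc-c)^2\right]}$ --- an $L_2$ quantity --- which the random-basis approximation theorems (the \cite{Ali08approx}, \cite{alibenj08} results, applicable under the normalized-kernel Assumption~\ref{ass:three} with the pairs $(x_i,y_j)$ viewed as samples from the product measure $\baru$) bound by $\|c\|_{\baru}\left(1+\sqrt{2\log(1/\delta)}\right)/\sqrt{mn}$, uniformly in $\calU$ and free of dimension. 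Without replacing your Cauchy--Schwarz step by this measure-theoretic reduction (or some equivalent), the claimed uniform bound is not established.
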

\begin{proof}
 Now consider the Tikhonov regularized form of (\ref{eqn:empemb0}). Then, one of the term in the objective is $\|\calU_1\hatmu_s-\hatmu_t\|\le\|\calU_1\left(\hatmu_s-\mu_s\right)\|+\left\|\mu_t-\hatmu_t\right\|+\left\|\calU_1\mu_s-\mu_t\right\|$, which is less than $\|\calU_1\mu_s-\mu_t\|+O(\frac{1}{\sqrt{p}})$ with high probability. Here, $p=\min(m,n)$. The first inequality is by triangle inequality and the second is the crucial one that follows from sample complexity of kernel mean embeddings (see theorem 2 in~\cite{smola07kmm}), and the boundedness of $\|\calU_1\|$. Also, the constants in $O(\frac{1}{\sqrt{p}})$ are independent of samples, variables and dimensions. By symmetry, we also have with high probability that  $\|\calU_1\mu_s-\mu_t\|\le\|\calU_1\hatmu_s-\hatmu_t\|+O(\frac{1}{\sqrt{p}})$. Hence, with high probability, uniformly over the feasibility set,  $\left|\|\calU_1\mu_s-\mu_t\|-\|\calU_1\hatmu_s-\hatmu_t\|\right|\le O(\frac{1}{\sqrt{p}})$. Analogous arguments hold for the other regularizer terms too. Now, we analyze the linear objective term. By Jensen's inequality, $|\langle\hatc,\calU\rangle-\langle c,\calU\rangle|\le\sqrt{\E_u\left[\left(\hatc-c\right)^2\right]}$, where $u$ is the measure corresponding to $\calU$. Let $\baru$ denote the product measure of the given marginals. It is easy to see that $\left\{\left(x_i,y_j\right)\ |\ i\in1,\ldots,m,j\in1,\ldots,n\right\}$ is an iid set of samples from $\baru$. By eqn. (4) in theorem~3.1~in~\cite{Ali08approx} and lemma~1 in~\cite{alibenj08}, we have $\sqrt{\E_u\left[\left(\hatc-c\right)^2\right]}\le\frac{\|c\|_{\baru}}{\sqrt{mn}}\left(1+\sqrt{2\log(\frac{1}{\delta})}\right)$, with probability $\delta$. Here, $\|\cdot\|_{\baru}$ is same as that defined in section III of~\cite{Ali08approx}, and theorem~3.1 in~\cite{Ali08approx} applies to our case as we assumed normalized kernels. In particular, this bound is independent of dimensions and $\calU$. To summarize, we have, $|\langle\hatc,\calU\rangle-\langle c,\calU\rangle|\le O(\frac{1}{\sqrt{mn}})$. Finally, again by triangle inequality, $\left|\hath\left(\calU,\calU_1,\calU_2\right)-h\left(\calU,\calU_1,\calU_2\right)\right|$ is less than the sum of deviations in each of the terms detailed above. Since each of these deviations is upper bounded uniformly by $O\left(\frac{1}{\sqrt{p}}\right)$, the claim is proved.
\end{proof}
The proof of the theorem then follows from standard arguments: $\hath\left(\hatcalU,\hatcalU_1,\hatcalU_2\right)-h\left(\calU^*,\calU_1^*,\calU_2^*\right)\le h\left(\hatcalU,\hatcalU_1,\hatcalU_2\right)-h\left(\calU^*,\calU_1^*,\calU_2^*\right)+\max_{\left(\calU,\calU_1,\calU_2\right)\in\calB\left(\vartheta_1,\vartheta_2\right)}\hath\left(\calU,\calU_1,\calU_2\right)-h\left(\calU,\calU_1,\calU_2\right)\le h\left(\hatcalU,\hatcalU_1,\hatcalU_2\right)-h\left(\calU^*,\calU_1^*,\calU_2^*\right)+O\left(1/\sqrt{\min(m,n)}\right)$ by the claim. Now, the estimation error, $h\left(\hatcalU,\hatcalU_1,\hatcalU_2\right)-h\left(\calU^*,\calU_1^*,\calU_2^*\right)$, which is non-negative, is equal to $\left(\hath\left(\hatcalU,\hatcalU_1,\hatcalU_2\right)-\hath\left(\calU^*,\calU_1^*,\calU_2^*\right)\right)+\left(h\left(\hatcalU,\hatcalU_1,\hatcalU_2\right)-\hath\left(\hatcalU,\hatcalU_1,\hatcalU_2\right)\right)+\left(\hath\left(\calU^*,\calU_1^*,\calU_2^*\right)-h\left(\calU^*,\calU_1^*,\calU_2^*\right)\right)\le O\left(1/\sqrt{\min(m,n)}\right)$. The last inequality follows from the claim and the definition of $\left(\hatcalU,\hatcalU_1,\hatcalU_2\right)$ that it minimizes $\hath$. Analogous arguments give $h\left(\calU^*,\calU_1^*,\calU_2^*\right)-\hath\left(\hatcalU,\hatcalU_1,\hatcalU_2\right)\le O\left(1/\sqrt{\min(m,n)}\right)$. This not only completes the proof but also shows that the estimation error also decays with rate that is dimension-free.
\end{proof}
\section{Proof of representer theorem}\label{app:repthm}
\begin{proof}
Without loss of generality, we consider the parameterization: $\calU^\alpha=\sum_{i=1}^m\sum_{j=1}^n\alpha_{ij}\phi_1(x_i)\otimes\phi_2(y_j) + \calU^\perp,\ \calU_1^\beta=\sum_{i=1}^m\sum_{j=1}^n\beta_{ji}\phi_2(y_j)\otimes\phi_1(x_i) + \calU_1^\perp, \calU_2^\gamma=\sum_{i=1}^m\sum_{j=1}^n\gamma_{ij}\phi_1(x_i)\otimes\phi_2(y_j) + \calU_2^\perp,\ \Sigma^\xi_1=\sum_{i=1}^m\xi_{i}\phi_1(x_i)\otimes\phi_1(x_i) + \Sigma_1^\perp,\Sigma^\theta_2=\sum_{j=1}^n\theta_{j}\phi_2(y_j)\otimes\phi_2(y_j) + \Sigma_2^\perp,$ where $\calU^\perp,\calU_1^\perp,\calU_2^\perp,\Sigma_1^\perp,\Sigma_2^\perp$ are the respective orthogonal complements. Let us include the constraint that the underlying measure for $\calU^\alpha,\Sigma_1^\xi,\Sigma_2^\theta$ is the same, as a domain/feasibility constraint.

It is easy to see that the objective as well as the first two inequalities in (\ref{eqn:empemb0}) do not involve the orthogonal complements. Also the term $\left\|\calU-\hatSigma_{ss}\calU_1^\top\right\|^2_{\calH_2\otimes\calH_1}$ can be written as sum of a term not involving the orthogonal complements and $\|\calU^\perp-\hatSigma_{ss}\left(\calU_1^\perp\right)^\top\|^2_{\calH_2\otimes\calH_1}$. Like-wise the each of the terms $\left\|\calU-\calU_2\hatSigma_{tt}\right\|^2_{\calH_2\otimes\calH_1}, \|\Sigma_1^\xi-\Sigma_{ss}\|_{\calH_1\otimes\calH_1}^2,\|\Sigma_2^\xi-\Sigma_{tt}\|_{\calH_2\otimes\calH_2}^2$ can be written as a sum of a term not involving the orthogonal complements, and the terms $\|\calU^\perp-\calU_2^\perp\hatSigma_{tt}\|^2_{\calH_2\otimes\calH_1},\|\Sigma_1^\perp\|^2_{\calH_1\otimes\calH_1},\|\Sigma_2^\perp\|^2_{\calH_2\otimes\calH_2}$ respectively.

Now re-writing (\ref{eqn:empemb0}), where all the norm constraints are equivalently replaced by the norm-squared constraints, in Tikhonov regularization form reads as: $\min_{f\in\calS\subseteq\calH}\hatcalR\left[f\right]+\Omega[f]$, where $f=(\calU,\calU_1,\calU_2,\Sigma_1,\Sigma_2),\calH=\left(\calH_2\otimes\calH_1\right)\oplus\left(\calH_1\otimes\calH_2\right)\oplus\left(\calH_2\otimes\calH_1\right)\oplus\left(\calH_1\otimes\calH_1\right)\oplus\left(\calH_2\otimes\calH_2\right)$, and $\calS=\left\{(\calU,\calU_1,\calU_2,\Sigma_1,\Sigma_2)\ \left|\ \begin{array}{c}\calU\in\calE_{21},\calU_1\in\calL_{12},\calU_2\in\calL_{21},\Sigma_1\in\calL_{11},\Sigma_2\in\calL_{22},\\ \textup{ underlying measure for $\calU^\alpha,\Sigma_1^\xi,\Sigma_2^\theta$ is the same}\end{array}\right.\right\}$, $\Omega[f]\equiv\|\calU^\perp-\hatSigma_{ss}\left(\calU_1^\perp\right)^\top\|^2_{\calH_2\otimes\calH_1}+\|\calU^\perp-\calU_2^\perp\hatSigma_{tt}\|^2_{\calH_2\otimes\calH_1}$ and $\hatcalR[f]$ is the remaining objective that does not involve the orthogonal complements. Also, let $\hatcalS\subset\calS$ denote $\left\{f=(\calU,\calU_1,\calU_2,\Sigma_1,\Sigma_2)\in\calS\ |\ \calU^\perp=0,\calU_1^\perp=0,\calU^\perp_2=0\right\}$. Note that if $=(\calU^\alpha,\calU_1^\beta,\calU_2^\gamma,\Sigma_1^\xi,\Sigma_2^\theta)\in\hatcalS$, then $\Sigma_1^\perp=\Sigma_2^\perp=0$ and $\xi_i=\sum_{j=1}^n\alpha_{ij},\theta_j=\sum_{i=1}^m\alpha_{ij}$. This is again because of one-to-one correspondences implied by the characterstic/universal kernels. Let $\Pi_{\hatcalS}$ denote the projection onto $\hatcalS$. Now, for any $f\in\calH$, we have that: $\hatcalR[\Pi_{\hatcalS}(f)]=\hatcalR[f]$ and more importantly, $0=\Omega[\Pi_{\hatcalS}(f)]\le\Omega[f]$. Consider the following argument\footnote{See also~\cite{Bagnell15} for similar a argument.}: $\min_{f\in\calS\subset\calH}\hatcalR[f]+\Omega[f]\le \min_{f\in\hatcalS\subset\calH}\hatcalR[f]+\Omega[f]= \min_{f\in\calS\subset\calH}\hatcalR[\Pi_{\hatcalS}(f)]+\Omega[\Pi_{\hatcalS}(f)]\le\min_{f\in\calS\subset\calH}\hatcalR[f]+\Omega[f].$ This proves that the orthogonal complements are all zero at optimality.

Now, let $\calL\equiv\left\{\calU^\alpha\ |\ \alpha\in\R^{m\times n}\right\}$, $\calP\equiv\left\{\calU^\alpha\ |\ \alpha\in\R^{m\times n},\alpha\ge0,\bone^\top\alpha=1\right\}$ and $\calA\equiv\left\{\sum_{i=1}^{m^\prime}\sum_{j=1}^{n^\prime}\alpha_{ij}\phi_1(x_i^\prime)\otimes\phi_2(y_j^\prime)\ |\ \alpha\in\R^{m^\prime\times n^\prime},\alpha\ge0,\bone^\top\alpha=1, x_i^\prime\in\calX,y_j^\prime\in\calY,m^\prime,n^\prime\in\N\right\}$. Then, the only thing left to be shown is that $\calE_{21}\cap\calL=\calP$. While $\calP\subseteq\calE_{21}\cap\calL$ is trivial. The converse is true because of the following facts: 
\begin{enumerate}
    \item $\calE_{21}=cl(\calA)$, where $cl$ denotes the set closure. While  $cl(\calA)\subseteq\calE_{21}$ is trivial, the converse follows from the convergence of average of sample embeddings to the true embedding~(see theorem 2 in~\cite{smola07kmm}).
    \item if $\calU\in\calL\backslash\calP$, then $\calU\notin\calA$. This is because the expansion of embeddings in RKHS of a universal kernel are unique. Also, $\calU\in\calL, \calU\notin\calA\Rightarrow\calU\notin cl(\calA)$.
\end{enumerate}
\end{proof}
\section{Proof of Theorem \ref{thm:sgd}}\label{app:sgd}
\begin{proof}
Firstly, by theorem~\ref{thm:sampcomp}, for low enough hyper-parameters we know that the conditional operator obtained by solving (\ref{eqn:final}) are consistent with dimension-free sample complexity. Hence the Barycentric-projection problem is nothing but a stochastic optimization problem with samples as $y_j$ with likelihood $\sum_{j=1}^n\left(\beta^*_{ji}k_1\left(x_i,x\right)\right)$. Using sampling with replacement, these can be converted to $m^\prime$ iid samples with uniform likelihood. Since the cost is assumed to be a metric or it's power greater than unity, the stochastic optimization problem is infact convex wrt $y$. Since the domain $\calY$ is bounded, it is also Lipschitz continuous wrt. $y$. Hence by (7), theorem 3 in~\cite{ShalevShwartz2009StochasticCO}, the estimation error in optimal transport map when solved by SGD is $O(1/\sqrt{m^\prime})$ and remains dimension-free.
\end{proof}
\section*{Acknowledgements}
JSN would like to thank Manohar Kaul and Jatin Chauhan for initial discussions. Part of this work was done while JSN was visiting Microsoft IDC, Hyderabad, and JSN thanks Microsoft for the visiting opportunity. PJ would like to thank Bamdev Mishra for useful discussions.



\bibliography{main}
\bibliographystyle{plain}

\end{document}